\documentclass{article}

\usepackage{microtype}
\usepackage{graphicx}
\usepackage{subcaption}
\usepackage{booktabs} %

\usepackage{hyperref}

\usepackage[accepted]{icml2026}

\usepackage{amsfonts}
\usepackage{amsmath,amssymb,mathtools,amsthm}

\usepackage[capitalize,noabbrev]{cleveref}

\theoremstyle{plain}
\newtheorem{theorem}{Theorem}[section]
\newtheorem{proposition}[theorem]{Proposition}
\newtheorem{lemma}[theorem]{Lemma}
\newtheorem{corollary}[theorem]{Corollary}
\theoremstyle{definition}

\theoremstyle{remark}
\newtheorem{remark}[theorem]{Remark}

\usepackage[textsize=tiny]{todonotes}

\usepackage{url}            
\usepackage{nicefrac}       %

\usepackage{booktabs}       
\usepackage{multirow}
\usepackage{siunitx}
\usepackage{algorithm}
\usepackage{algorithmic}
\usepackage{enumitem}

\usepackage[table]{xcolor}
\usepackage{graphicx}
\usepackage{subcaption}
\usepackage{wrapfig}
\usepackage{tikz}
\usepackage{pgfplots}
\usepackage{pgfplotstable}

\usepackage{tabularx}
\usepackage{array}
\usepackage{listings}
\usepackage[most]{tcolorbox}

\usepackage{adjustbox}
\usepackage{caption}

\usetikzlibrary{arrows.meta,positioning,calc,decorations.pathreplacing}
\tcbuselibrary{skins,breakable}
\renewcommand{\paragraph}[1]{\vspace{.2em}\noindent\textbf{#1}}

\pgfplotsset{compat=1.18}
\usetikzlibrary{patterns}

\definecolor{speedupcolor}{RGB}{236, 112, 99}
\definecolor{nfecolor}{RGB}{46, 134, 193}
\definecolor{nipsred}{rgb}{0.65,0.17,0.17}

\newcommand{\mask}{\texttt{[M]}}
\newcommand{\TV}{\mathrm{TV}}
\newcommand{\KL}{\mathrm{KL}}
\DeclareMathOperator*{\argmax}{arg\,max}

\newcommand{\seqx}{\boldsymbol{x}} %
\newcommand{\E}{\mathbb{E}}

\newcommand{\speedup}[1]{\textcolor{speedupcolor}{\tiny #1$\times$}}
\newcommand{\nfered}[1]{\textcolor{nfecolor}{\tiny $\downarrow$#1\%}}

\icmltitlerunning{Fast-dLLM++}

\begin{document}

\twocolumn[
  \icmltitle{Fast-dLLM++: Fr\'{e}chet Profile Decoding for Faster Diffusion LLM Inference}

  \icmlsetsymbol{equal}{*}

  \begin{icmlauthorlist}
    \icmlauthor{Siva Rajesh Kasa}{equal,yyy}
    \icmlauthor{Yasong Dai}{equal,comp}
    \icmlauthor{Sumit Negi}{yyy}
    \icmlauthor{Hongdong Li}{comp}
  \end{icmlauthorlist}

  \icmlaffiliation{yyy}{Amazon Inc.}
  \icmlaffiliation{comp}{School of Computing, Australian National University}

  \icmlcorrespondingauthor{Siva Rajesh Kasa}{kasa@u.nus.edu}
  \icmlcorrespondingauthor{Yasong Dai}{yasong.dai@anu.edu.au}

  \icmlkeywords{diffusion language models, inference, throughput}

  \vskip 0.3in
]

\printAffiliationsAndNotice{\icmlEqualContribution}

\begin{abstract}
Diffusion large language models promise parallel token generation, yet inference remains bottlenecked by deciding which masked tokens can be safely committed together. Fast-dLLM addressed this with KV caching and confidence-guided parallel decoding, but its decoding theory uses a homogeneous high-confidence assumption that effectively reduces each candidate set to its weakest selected token. We argue that this leaves speed on the table because real decoding steps exhibit heterogeneous confidence profiles.
We propose \textbf{Fast-dLLM++}, a training-free extension that introduces \emph{Fr\'{e}chet profile decoding}: selecting parallel commit sets from the full sorted confidence profile rather than a single worst-case confidence. The resulting rule is a heterogeneous-confidence generalization of Fast-dLLM's factor selector and it recovers the previous rule exactly in the equal-confidence case and adds a provable \emph{heterogeneity bonus} when the selected tokens have uneven confidences. Fast-dLLM++ leaves the model, diffusion process, and cache implementation entirely unchanged, making it a drop-in replacement for existing Fast-dLLM decoding.
Experiments on GSM8K, MATH, HumanEval, and MBPP with the LLaDA-8B model show that the theoretical improvement translates directly into empirical gains: profile-aware selection improves the accuracy--throughput frontier by exploiting safe parallelism that weakest-token rules miss, achieving up to 37\% higher throughput at comparable accuracy. Our anonymous code release is at \url{https://github.com/Ringo-Star/FastdLLM_plusplus}.
\end{abstract}

\section{Introduction}
\label{sec:intro}

Diffusion generative modeling originates from nonequilibrium thermodynamic constructions and was later developed into denoising diffusion and score-based generative models~\citep{sohlDickstein2015deep,ho2020denoising,song2021scorebased}. For discrete data, subsequent work introduced categorical and multinomial diffusion processes and absorbing-state variants~\citep{hoogeboom2021argmax,campbell2022continuous,sun2022scorebased,austin2021structured}.
Masked diffusion language models (MDLMs)
generate text by iteratively unmasking tokens from a fully masked sequence ~\citep{zhong2026parallelism,li2026diffusion, sahoo2024simple,shi2024simplified}.
Unlike autoregressive models, they can in principle decode multiple positions
simultaneously, offering a path to substantially faster inference. However,
parallel decoding introduces a \emph{curse of parallelism} \citep{israel2025accelerating, liu2024discrete, azangulov2025parallel}: if several masked
positions are decoded independently from their marginal distributions, the
resulting combination may be incoherent even when each individual token appears
likely. 

Fast-dLLM \citep{wu2025fastdllm} addressed this challenge 
through confidence-guided parallel decoding rules, \textit{threshold
and factor}, that decide how many tokens to commit per step. The factor rule
accepts a candidate set of size $n$ when $(n+1)(1-c_{(n)}) < f$, where $c_{(n)}$
is the \emph{weakest} confidence among the top-$n$ candidates and $f$ is a
tunable parameter. This criterion is grounded in a worst-case analysis that
assumes all selected tokens share the same confidence level. This homogeneous-confidence assumption is conservative. In practice, the
confidence profile across masked positions is highly heterogeneous: a few
positions may have near-certain predictions while others are moderately
confident. By compressing this profile to a single scalar i.e. the weakest
confidence, the factor rule discards information that could safely license
additional parallelism.

We propose \textbf{Fast-dLLM++}, which replaces the weakest-token selector with
\emph{Fr\'{e}chet profile decoding}. As illustrated in Figure~\ref{fig:illustrative}, the key idea is to use the full sorted
confidence profile $(c_{(1)}, c_{(2)}, \ldots, c_{(n)})$ rather than only
$c_{(n)}$. For each candidate prefix of size $n$, we compute a Fr\'{e}chet
lower bound $L_n$ on the probability that all $n$ marginal greedy tokens are
jointly correct, and an upper bound $U_n$ on the probability of any competing
tuple. We commit the largest prefix satisfying $L_n - U_n > \delta$, where
$\delta \ge 0$ is a margin parameter.

Our work is also related to inference acceleration for autoregressive LLMs. Blockwise parallel decoding predicts multiple future tokens and validates the longest acceptable prefix~\citep{stern2018blockwise}; speculative decoding and speculative sampling accelerate generation by drafting multiple tokens and verifying them in parallel with a target model~\citep{leviathan2023fast,chen2023speculative}; and multi-head and relaxed-verification approaches further improve the acceptance-rate versus quality frontier~\citep{cai2024medusa,wang2026diversed}. Fast-dLLM++ addresses an analogous speed--quality trade-off in diffusion decoding, but does so without a separate drafter or verifier: the commit decision is made from a sharper marginal-confidence certificate inside diffusion decoding itself.

\begin{figure*}[t]
\centering
\resizebox{\textwidth}{!}{\input{figures/fig1_tikz}}
\caption{\textbf{Fr\'{e}chet Profile Decoding exploits heterogeneous confidence profiles to commit more tokens per denoising step.}
At each masked diffusion step, the model predicts candidate tokens with confidences \(c_i\); green marks committed tokens, gray marks deferred tokens, and red marks the factor bottleneck \(c_{(n)}\).
The green shaded region denotes the heterogeneity bonus \(B_n=\sum_{j<n}(c_{(j)}-c_{(n)})\), which is the extra profile information discarded by the homogeneous weakest-token factor rule.}
\label{fig:illustrative}
\end{figure*}

Our contributions are as follows:
\begin{itemize}[leftmargin=*,itemsep=2pt,topsep=2pt]
    \item We introduce \textbf{Fr\'{e}chet profile decoding}, a training-free
    parallel decoding rule for diffusion LLMs that uses the full sorted
    confidence profile, rather than a fixed confidence threshold or only the
    weakest selected token.

    \item We prove that the Fr\'{e}chet criterion gives a heterogeneous-confidence
    sufficient condition under which greedy parallel decoding matches the true
    joint greedy decision (Theorem~\ref{thm:frechet-greedy}). We further show
    that, without additional dependence information across positions, the
    Fr\'{e}chet bound is the strongest distribution-free conservative certificate
    available from marginal confidences alone.

    \item We show that Fr\'{e}chet decoding reduces exactly to Fast-dLLM factor
    decoding in the equal-confidence case. We then derive a
    \emph{heterogeneity-bonus decomposition}
    (Proposition~\ref{thm:heterogeneity-bonus}) that explains when and why
    profile-aware decoding can commit more tokens per denoising step.

    \item We evaluate Fast-dLLM++ on GSM8K, MATH, HumanEval, and MBPP using
    LLaDA-8B-Instruct and Dream-v0-Base-7B across multiple cache regimes and
    generation lengths. The results show that the theoretical improvement
    translates into empirical gains in the accuracy--throughput frontier. We
    further evaluate Fast-dLLM++ on multimodal reasoning tasks with LLaDA-V.
\end{itemize}

\section{Preliminary}
\label{sec:preliminary}

\paragraph{Masked Diffusion Models.}
Masked diffusion models~\cite{lou2024sedd,sahoo2024simple} are a dominant paradigm in discrete diffusion models by introducing absorbing states in the Markov process. 
For later convenience, we denote $\seqx:=(x_1,\ldots,x_L)\in\mathcal{V}^L$ a clean token sequence and let $\mask$ be an absorbing mask symbol. 
At a given reverse diffusion step $t$, let $E$ denote the current \emph{evidence}: the prompt, all previously unmasked tokens, and current partially masked sequence. $\mathcal{M}(E)$ is the set of masked positions.
\begin{equation}
\begin{aligned}
q_t(x_t \mid x_0)
= \prod_{i=1}^{L}
\Big[
&(1-\gamma(t))\mathbf{1}\{x_{t,i}=x_{0,i}\} \\
&+ \gamma(t)\mathbf{1}\{x_{t,i}=[\mathrm{M}]\}
\Big].
\end{aligned}
\end{equation}
In the continuous-time formulation, a noise level $t\in[0,1]$ is associated with a masking rate schedule $\gamma(t)$, where $\gamma(0)=0$ and $\gamma(1)=1$.
The learned reverse process predicts the clean token at each masked position, i.e., $p_\theta(x_{0,i}\mid \seqx_t,t),\quad i\in\mathcal{M}_t:=\{i:x_{t,i}=\mask\}.$

Earlier discrete diffusion work introduced structured categorical corruption kernels and absorbing-state variants~\cite{austin2021structured}, while later masked diffusion formulations simplified the objective into a weighted masked-token prediction losses and showed that such models can scale competitively for language modeling~\cite{sahoo2024simple,shi2024simplified,nie2025llada,ye2025dream}.
Beyond masked diffusion, several text-diffusion lines have also explored continuous latent diffusion, sequence-to-sequence diffusion, simplex diffusion, and diffusion-style masked language modeling~\cite{li2022diffusionlm,gong2023diffuseq,han2023ssdlm,he2022diffusionbert}. Block diffusion further interpolates between autoregressive and diffusion language modeling and is especially relevant to cache-compatible generation~\cite{arriola2025block}.
\begin{equation}
\mathcal{L}_{\mathrm{MDLM}}(\theta)
=\E_{\seqx_0,t,\seqx_t\sim q_t}\left[
 w(t)\sum_{i\in\mathcal{M}_t}-\log p_\theta(x_{0,i}| \seqx_t,t)
\right],
\label{eq:masked_dllm_loss}
\end{equation}
where $w(t)$ is determined by the chosen variational, score-entropy, or simplified masked-diffusion objective. During inference,
the model computes marginal distributions at denoising step $k$ over masked positions and commits a subset $S_k\subseteq \mathcal{M}_k$:$\hat{x}_{i}=\arg\max_{v\in\mathcal{V}}p_\theta(v\mid \seqx_k,t_k),  i\in S_k.$

\paragraph{The Curse of Parallelism in dLLM Decoding.}
The main speed advantage of diffusion language models comes from parallelism (increasing $|S_k|$) at inference time~\cite{yu2025dimple,ben2025accelerated}. However, the same operation creates the \emph{curse of parallelism}: increasing the number of simultaneously unmasked tokens reduces the number of denoising steps, but also increases the gap between the true conditional joint distribution and the product of token-wise marginals used by dLLMs~\cite{kang2025parallelbench}.

Specifically, let $I=\{i_1,\ldots,i_n\}$ be a set of masked positions proposed for simultaneous decoding. The ideal conditional distribution is:
\begin{equation}
p(\mathbf{z}\mid E)=p(X_{i_1}=z_1,\ldots,X_{i_n}=z_n\mid E).
\end{equation}
A standard parallel decoder instead uses the product of marginal predictions,
\vspace{-1em}
\begin{equation}
q(\mathbf{z} | E)=\prod_{j=1}^{n}p_j(z_j | E),
\;
p_j(z_j | E):=p(X_{i_j}=z_j | E).
\label{eq:product_marginal_parallel}
\end{equation}
This approximation ignores dependencies among the tokens in $I$. 
Consequently, decoding many positions in parallel can finalize locally plausible tokens that are globally inconsistent. This effect has been identified as a root cause of quality degradation in parallel dLLM decoding \cite{wu2025fastdllm,bansal2025enabling}.

\paragraph{Confidence-Aware Parallel Decoding.}
Confidence-aware decoding attempts to exploit parallelism only when the conditional-independence approximation is almost harmless. For each masked position $i$, define the model confidence $c_i=\max_{v\in\mathcal{V}}p_\theta(v\mid \seqx_k,t_k).$
A threshold-based decoder commits $S_k=\{i\in\mathcal{M}_k:c_i\ge \tau\}$
and, if $S_k=\emptyset$, commits the single highest-confidence token. Fast-dLLM~\cite{wu2025fastdllm} provides a useful theoretical justification for this rule through a high-confidence condition on product-of-marginals decoding.
The theorem also gives a direct interpretation of confidence-aware decoding: If all selected tokens have confidence at least $1-\epsilon$, then committing $n$ tokens is safe only when $\epsilon$ is small relative to $n$.

However, the main theorem in Fast-dLLM~\cite{wu2025fastdllm} only make use of the smallest marginal confidence $\epsilon$, which is a homogeneous case of expiolitng dLLM's inference confidence profile. Therefore, our core insight is a strictly better bound can be developed by extending the preceding theorem to heterogeneous case.

\section{Fr\'{e}chet Profile Decoding}
\label{sec:frechet}

\paragraph{Notation.} For each $i \in \mathcal{M}(E)$, the diffusion language model induces a marginal
predictive distribution
$p_i(v \mid E) := p_\theta(X_i = v \mid E)$, $v \in \mathcal{V}$.
Define the \emph{marginal greedy token}
$x_i^\star := \argmax_{v \in \mathcal{V}}\, p_i(v \mid E)$
and its \emph{confidence}
$c_i := p_i(x_i^\star \mid E)$.
For a candidate set $S = \{i_1, \ldots, i_n\} \subseteq \mathcal{M}(E)$, let
$x_S^\star := (x_{i_1}^\star, \ldots, x_{i_n}^\star)$
be the tuple of marginal greedy tokens, with confidences sorted as
$c_{(1)} \ge c_{(2)} \ge \cdots \ge c_{(n)}$.
Let $P_S(z \mid E) := p_\theta(X_S = z \mid E)$ denote the true joint
conditional distribution over the selected positions, and let
$Q_S(z \mid E) := \prod_{i \in S} p_i(z_i \mid E)$ denote the
product-of-marginals approximation used by parallel decoding.

\paragraph{Fr\'{e}chet profile quantities.} For a candidate prefix of size $n$, define the \emph{Fr\'{e}chet lower bound}
on joint correctness
$L_n := \max\!\left\{ 0,\, \sum_{j=1}^{n} c_{(j)} - (n-1) \right\}$,
the \emph{competing-event upper bound}
$U_n := 1 - c_{(n)}$,
and the \emph{Fr\'{e}chet score}
$G_n := L_n - U_n$.
Given a margin $\delta \ge 0$, Fr\'{e}chet profile decoding selects
$n^\star := \max\{ n : G_n > \delta \}$,
with the convention $n^\star = 1$ if the set is empty (ensuring progress).

\subsection{From weakest-token to profile-aware decoding}
\label{sec:from-weakest}

Fast-dLLM factor decoding selects a parallel commit set using only the weakest
confidence $c_{(n)}$ among the top-$n$ candidate tokens. This is conservative
when confidence profiles are heterogeneous: a single moderately confident token
can prevent committing several nearly certain tokens.

We instead use the full sorted confidence profile. For the top-$n$ candidates,
the Fr\'{e}chet--Hoeffding lower bound gives $L_n$ as a worst-case lower bound
on the probability that all selected marginal argmax tokens are jointly correct.
This is the event-probability form of the classical Fr\'{e}chet--Hoeffding/Bonferroni lower bound: with only marginal event probabilities known, the sharp distribution-free lower bound on their intersection is $\max\{0,\sum_j p_j - (n-1)\}$~\citep{nelsen2006copulas,bonferroni1936teoria,boole1854laws}.
Any alternative tuple must differ in at least one coordinate and therefore has
probability at most $U_n = 1 - c_{(n)}$. We commit the largest prefix satisfying
$L_n - U_n > \delta$. This rule is training-free, requires only sorting and
prefix sums, and is compatible with current cache mechanisms.
Algorithm~\ref{alg:frechet-profile-decoding} presents the complete procedure.
The only change relative to Fast-dLLM is the token-selection rule in
lines~11--18; the model, diffusion schedule, and caching are untouched.

\section{Theoretical Analysis}
\label{sec:theory}

The theory is organized around the amount of information available to the decoder.

\paragraph{Proof organization.}
For readability, the main text states each result together with its intuition and practical implication. All formal proofs for Sections~\ref{sec:theory} and their certification extensions in ~\ref{app:certified-frontier} are collected in Appendices~\ref{app:proofs} and~\ref{app:proofs-certified}, respectively.

\paragraph{Marginal-only certificates.}
First, we ask what can be guaranteed using only token-level marginal confidences. This is the setting of threshold, factor, and our Fr\'{e}chet selector. Theorem~\ref{thm:frechet-greedy} shows that the Fr\'{e}chet lower bound gives a distribution-free certificate: if the lower bound on the marginal-greedy tuple exceeds an upper bound on every competitor, then the parallel greedy commit agrees with the true joint greedy decision.

\paragraph{Relationship to Fast-dLLM factor decoding.}
Second, we show that Fast-dLLM's factor rule is not a separate principle but the equal-confidence specialization of the Fr\'{e}chet certificate. When all selected tokens have the same confidence, Fr\'{e}chet exactly recovers factor decoding under the parameter mapping $f=1-\delta$.

\begin{algorithm}[t]
\caption{Fast-dLLM++: Fr\'{e}chet Profile Decoding}
\label{alg:frechet-profile-decoding}
\begin{algorithmic}[1]
\REQUIRE Diffusion LM $p_\theta$, prompt $p_0$, generation length $L$, block
size $B$, margin $\delta \ge 0$, cache mode
$\mathsf{cache} \in \{\textsc{None}, \textsc{Prefix}, \textsc{Dual}\}$
\STATE Initialize sequence $x \gets [p_0;\; \mask, \ldots, \mask]$
\FOR{each generation block $\mathcal{B}$}
    \STATE Initialize or refresh cache according to $\mathsf{cache}$
    \WHILE{$\mathcal{B}$ contains masked positions}
        \STATE Run $p_\theta$ on the active block/context to obtain logits
        \FOR{each masked position $i \in \mathcal{M}$}
            \STATE $\hat{x}_i \gets \argmax_{v \in \mathcal{V}}\, p_\theta(X_i = v \mid x)$
            \STATE $c_i \gets \max_{v \in \mathcal{V}}\, p_\theta(X_i = v \mid x)$
        \ENDFOR
        \STATE Sort confidences: $c_{(1)} \ge c_{(2)} \ge \cdots \ge c_{(m)}$
        \FOR{$n = 1, \ldots, m$}
            \STATE $L_n \gets \max\!\left\{0,\;\sum_{j=1}^n c_{(j)} - (n-1)\right\}$
            \STATE $U_n \gets 1 - c_{(n)}$
            \STATE $G_n \gets L_n - U_n$
        \ENDFOR
        \STATE Select $n^\star \gets \max\{n : G_n > \delta\}$
        \IF{no such $n^\star$ exists}
            \STATE $n^\star \gets 1$ \COMMENT{ensure progress}
        \ENDIF
        \STATE Reveal the $n^\star$ masked positions with largest confidence:
        $x_i \gets \hat{x}_i$
    \ENDWHILE
\ENDFOR
\STATE \textbf{return} $x$
\end{algorithmic}
\end{algorithm}

\paragraph{Why heterogeneous profiles help.}
Third, we decompose the Fr\'{e}chet score into a weakest-token factor core plus a nonnegative heterogeneity bonus. This explains the main algorithmic advantage: factor treats the selected prefix as if all tokens were as uncertain as the weakest token, while Fr\'{e}chet gives credit to the stronger tokens in the prefix.

\paragraph{Going beyond marginal-only guarantees.}
Finally, we consider what happens when we know something about the dependence structure. The total-variation and KL stability results show that if the true joint distribution $P_S$ is close to the product-of-marginals approximation $Q_S$, then the product-mode decision is stable even beyond the conservative Fr\'{e}chet regime.

Together, these results separate two regimes: Fr\'{e}chet is the sharp marginal-only certificate for heterogeneous confidence profiles, while dependence-aware information can certify additional parallel commits when the true joint is close to the product approximation.

\subsection{Heterogeneous-Confidence Greedy Equivalence}
\label{sec:greedy_equiv}

\begin{theorem}[Heterogeneous-confidence greedy equivalence]
\label{thm:frechet-greedy}
Fix evidence $E$ and a candidate set $S=\{i_1,\ldots,i_n\}$. Let $x_S^\star$ be the tuple of marginal greedy tokens and let $c_{(1)}\ge\cdots\ge c_{(n)}$ be the sorted marginal confidences. Define
\begin{equation}
L_n = \max\!\left\{0,\,\sum_{j=1}^{n}c_{(j)}-(n-1)\right\},
\qquad
U_n = 1-c_{(n)}.
\label{eq:frechet_Ln_Un}
\end{equation}
If $L_n > U_n$, then $x_S^\star$ is the unique maximizer of the true joint conditional distribution:
\begin{equation}
x_S^\star = \argmax_{z\in\mathcal{V}^n}\,P_S(z\mid E).
\label{eq:frechet_greedy_equiv}
\end{equation}
\end{theorem}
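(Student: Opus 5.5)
The plan is to sandwich the joint probabilities. We bound $P_S(x_S^\star\mid E)$ from below by $L_n$, and bound $P_S(z\mid E)$ from above by $U_n$ for every $z\neq x_S^\star$. The hypothesis $L_n>U_n$ then gives strict domination, which is exactly uniqueness of the maximizer in \eqref{eq:frechet_greedy_equiv}. Throughout, we read the marginals of $P_S$ as the model marginals $p_i(\cdot\mid E)$, so that $P_S(X_{i}=x_i^\star\mid E)=c_i$. This is the standing consistency assumption implicit in the Notation paragraph of Section~\ref{sec:frechet}.

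\emph{Lower bound.} Let $A_j=\{X_{i_j}=x_{i_j}^\star\}$, so that $P_S(A_j^c\mid E)=1-c_{i_j}$. By the union bound (Boole/Bonferroni),
\[
P_S\Bigl(\bigcap_j A_j\Bigr)=1-P_S\Bigl(\bigcup_j A_j^c\Bigr)\ge 1-\sum_{j=1}^n(1-c_{(j)})=\sum_{j=1}^n c_{(j)}-(n-1).
\]
Combined with nonnegativity of probabilities, this gives $P_S(x_S^\star\mid E)\ge L_n$. Sorting is irrelevant here, because the sum is permutation invariant.

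\emph{Upper bound on competitors.} Fix $z\neq x_S^\star$. Then there is an index $j$ with $z_j\neq x_{i_j}^\star$. The event $\{X_S=z\}$ is contained in $\{X_{i_j}=z_j\}\subseteq A_j^c$, so
\[
P_S(z\mid E)\le 1-c_{i_j}\le 1-c_{(n)}=U_n,
\]
where the last step uses $c_{i_j}\ge c_{(n)}$, since $c_{(n)}$ is the minimum confidence. (A sharper bound, $p_{i_j}(z_j)\le\min(c_{i_j},1-c_{i_j})$, is available but not needed.)

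\emph{Conclusion and main obstacle.} Combining the two bounds, $P_S(x_S^\star\mid E)\ge L_n>U_n\ge P_S(z\mid E)$ for every $z\neq x_S^\star$. The maximum over the finite set $\mathcal{V}^n$ is therefore attained uniquely at $x_S^\star$. The argument itself is routine. The only delicate point is the modeling assumption that the joint $P_S$ has the model's marginals; without it, $c_i$ carries no information about $P_S$. I would state this assumption explicitly at the start of the proof.

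Two further remarks belong in the proof. First, $L_n>U_n\ge 0$ forces $L_n>0$, so the $\max\{0,\cdot\}$ is inactive and the case $L_n=0$ never needs separate treatment. Second, when $n=1$ the statement reduces to $c_{(1)}>1/2$, which is consistent with the general argument.
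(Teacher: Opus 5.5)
Your proposal is correct and follows essentially the same route as the paper. Both arguments bound $P_S(x_S^\star\mid E)$ from below by the Fr\'{e}chet--Bonferroni bound (De Morgan plus Boole's inequality), bound $P_S(z\mid E)\le 1-c_{(n)}$ through a coordinate where $z$ disagrees with $x_S^\star$, and then conclude strict domination from $L_n>U_n$. Your explicit assumption that the marginals of $P_S$ coincide with the model marginals $p_i(\cdot\mid E)$ is a worthwhile clarification, since the paper uses this only implicitly by defining $c_j:=\Pr(A_j\mid E)$.
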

\paragraph{Intuition.}
The selected tuple $x_S^\star$ is the one obtained by taking the marginal argmax at every selected position. The Fr\'{e}chet lower bound $L_n$ is a worst-case lower bound on the probability that all these marginal choices are jointly correct. Any competing tuple must disagree in at least one coordinate, so its probability is at most the probability that the weakest selected token is wrong, $U_n=1-c_{(n)}$. If the lower bound on the selected tuple exceeds the upper bound on every competitor, the selected tuple must be the unique joint mode. Next, we show under equal confidence this reduces to exact factor decoding.

\begin{corollary}[Equal-confidence reduction]
\label{cor:factor-reduction}
Suppose $c_{(1)}=\cdots=c_{(n)}=c$. Then the Fr\'{e}chet margin criterion $G_n:=L_n-U_n>\delta$ reduces to
\begin{equation}
c > 1-\frac{1-\delta}{n+1}.
\label{eq:equal_conf_threshold}
\end{equation}
Fast-dLLM factor decoding with parameter $f$ accepts $n$ when $(n+1)(1-c_{(n)})<f$, which under equal confidences gives $c>1-f/(n+1)$. Setting $f=1-\delta$ makes the two criteria identical (equal-confidence case):
\begin{equation}
\text{Fr\'{e}chet}(\delta) \;\equiv\; \text{Factor}(f=1-\delta).
\label{eq:frechet_factor_equiv}
\end{equation}
For the parameter matching $f=1-\delta$, assume $\delta\in[0,1]$ so that the matched factor is nonnegative.
\end{corollary}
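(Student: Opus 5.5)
The plan is a direct algebraic computation: substitute the equal-confidence profile into the definitions of $L_n$ and $U_n$ from \eqref{eq:frechet_Ln_Un}, handle the $\max\{0,\cdot\}$ clamp by a two-case split, and then compare the result with the factor inequality. No earlier theorem is needed beyond these definitions. Theorem~\ref{thm:frechet-greedy} only supplies the interpretation of $G_n>\delta$, not the algebra.

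\textbf{Step 1 (substitution).} With $c_{(1)}=\cdots=c_{(n)}=c$, the profile sum is $nc$. Hence
\[
L_n=\max\{0,\,nc-(n-1)\},\qquad U_n=1-c .
\]

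\textbf{Step 2 (clamp inactive).} Suppose $nc-(n-1)\ge 0$. Then
\[
G_n=nc-(n-1)-(1-c)=(n+1)c-n .
\]
So $G_n>\delta$ holds iff $(n+1)c>n+\delta$, i.e.\ iff $c>\frac{n+\delta}{n+1}=1-\frac{1-\delta}{n+1}$. This is \eqref{eq:equal_conf_threshold}.

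\textbf{Step 3 (clamp active).} This is the only point requiring care; I expect it to be the main obstacle, though it is mild. I must check that the clamp never changes the accept/reject decision.

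\emph{The clamp branch always rejects.} If $nc-(n-1)<0$, then $L_n=0$ and $G_n=c-1\le 0\le\delta$, so the Fr\'echet rule rejects.

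\emph{The threshold also rejects there.} Suppose $c>1-\frac{1-\delta}{n+1}$ with $\delta\ge 0$. Then $c>1-\frac{1}{n+1}$, so
\[
nc-(n-1)>n-\tfrac{n}{n+1}-n+1=\tfrac{1}{n+1}>0 ,
\]
and $c$ lies in the unclamped branch. Equivalently, in the clamped branch $c\le 1-\frac{1}{n+1}\le 1-\frac{1-\delta}{n+1}$, so the threshold inequality fails there as well.

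Together with Step~2, this shows that $G_n>\delta$ is equivalent to \eqref{eq:equal_conf_threshold} for all $c\in[0,1]$.

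\textbf{Step 4 (factor side).} The Fast-dLLM acceptance condition $(n+1)(1-c_{(n)})<f$ with $c_{(n)}=c$ rearranges to $c>1-\frac{f}{n+1}$. Setting $f=1-\delta$ makes this inequality literally identical to \eqref{eq:equal_conf_threshold}. This holds for every $n$, so both rules accept exactly the same prefix sizes, and the selected $n^\star$ coincides, including the fallback $n^\star=1$ when nothing is accepted. That gives \eqref{eq:frechet_factor_equiv}.

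The restriction $\delta\in[0,1]$ is used in two places. It keeps $f\ge 0$, and the lower bound $\delta\ge 0$ is what Step~3 relies on.
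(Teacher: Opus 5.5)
Your proof is correct and follows the paper's route: substitute the flat profile, obtain $G_n=(n+1)c-n$, rearrange to $c>1-\frac{1-\delta}{n+1}$, and match the factor inequality at $f=1-\delta$. Your Step~3 is more careful than the paper, which simply assumes $L_n>0$ without checking that the threshold inequality also fails in the clamped branch; the one caveat is that your closing claim that the selected $n^\star$ coincides needs the entire candidate profile to be flat, since a flat top-$n$ prefix only guarantees agreement on prefix sizes $\le n$.
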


\paragraph{Intuition.}
Factor decoding can be viewed as replacing the whole selected confidence profile by a flat surrogate in which every selected token has confidence $c_{(n)}$. When the real profile is actually flat, this surrogate is exact, and Fr\'{e}chet reduces to factor. The difference between the two only appears when the confidence profile is heterogeneous. The following proposition formalizes the same.

\begin{proposition}[Heterogeneity bonus]
\label{thm:heterogeneity-bonus}
Assume $L_n>0$. The Fr\'{e}chet score $G_n:=L_n-U_n$ decomposes as
\begin{equation}
G_n = \underbrace{(n+1)\,c_{(n)}-n\vphantom{\sum_{j}}}_{F_n\;\text{(factor core)}}
\;+\;\underbrace{\sum_{j=1}^{n-1}\!\bigl(c_{(j)}-c_{(n)}\bigr)}_{B_n\;\text{(heterogeneity bonus)}},
\label{eq:heterogeneity_decomp}
\end{equation}
with $B_n\ge 0$. Equality $B_n=0$ holds iff the confidence profile is flat.
\end{proposition}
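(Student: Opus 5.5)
The plan is a direct algebraic verification. Under the hypothesis $L_n>0$, the maximum in the definition of $L_n$ is attained by the second argument, so
\[
L_n=\sum_{j=1}^{n}c_{(j)}-(n-1).
\]
Hence
\[
G_n=L_n-U_n=\sum_{j=1}^{n}c_{(j)}-(n-1)-(1-c_{(n)})=\sum_{j=1}^{n}c_{(j)}+c_{(n)}-n .
\]
This is the only place the assumption $L_n>0$ is used. It is needed because otherwise $G_n=c_{(n)}-1$ and the decomposition fails.

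Next I will add and subtract $c_{(n)}$ for each of the first $n-1$ terms of the sum:
\[
\sum_{j=1}^{n}c_{(j)}=n\,c_{(n)}+\sum_{j=1}^{n-1}\bigl(c_{(j)}-c_{(n)}\bigr).
\]
Substituting gives $G_n=(n+1)c_{(n)}-n+B_n$, which is exactly the split into $F_n+B_n$ in \eqref{eq:heterogeneity_decomp}. As a consistency check, $F_n>-\delta$ is equivalent to $(n+1)(1-c_{(n)})<1-\delta$, so $F_n$ is the factor criterion of Corollary~\ref{cor:factor-reduction} under $f=1-\delta$.

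For nonnegativity, the sorting $c_{(1)}\ge\cdots\ge c_{(n)}$ makes each summand $c_{(j)}-c_{(n)}$ nonnegative, so $B_n\ge 0$.

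For the equality case, a sum of nonnegative terms vanishes iff each term vanishes. That means $c_{(j)}=c_{(n)}$ for all $j<n$, i.e.\ the profile is flat; the converse is immediate. For $n=1$ the sum is empty, so $B_1=0$ and the profile is trivially flat, consistent with the claim.

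There is no real obstacle; the only point needing care is stating explicitly that the $\max\{0,\cdot\}$ is dropped because of $L_n>0$.
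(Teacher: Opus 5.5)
Your proof is correct and follows the paper's argument essentially step for step: you drop the $\max\{0,\cdot\}$ using $L_n>0$, expand to $G_n=\sum_{j=1}^n c_{(j)}+c_{(n)}-n$, add and subtract $(n-1)c_{(n)}$, and get $B_n\ge 0$ and the flat-profile equality case from the sorting. One small slip in your consistency check, which does not affect the proof: $(n+1)(1-c_{(n)})<1-\delta$ is equivalent to $F_n>\delta$, not $F_n>-\delta$.
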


Eq.~\ref{eq:heterogeneity_decomp} is the central insight: the factor core $F_n$ depends only on the weakest confidence $c_{(n)}$, while $B_n$ captures the additional information from stronger tokens. When the profile is heterogeneous, $B_n>0$ and Fr\'{e}chet decoding can accept larger commit sets than factor decoding at the same margin.

\paragraph{Intuition.}
The first term $F_n=(n+1)c_{(n)}-n$ is exactly the weakest-token logic used by factor. The second term $B_n$ is the extra area between the confidence profile and the weakest selected confidence. This area is nonnegative and measures how much information factor discards by compressing the profile to $c_{(n)}$.

\begin{corollary}[Matched-factor dominance]
\label{cor:matched-factor-dominance}
Compare Fr\'{e}chet decoding with margin $\delta$ to factor decoding with matched parameter $f=1-\delta$. If factor accepts a prefix of size $n$, then Fr\'{e}chet also accepts that prefix. Moreover, Fr\'{e}chet strictly accepts a prefix rejected by matched factor whenever
\[
F_n \le \delta < F_n+B_n,
\]
where $F_n=(n+1)c_{(n)}-n$ and $B_n=\sum_{j=1}^{n-1}(c_{(j)}-c_{(n)})$.
\end{corollary}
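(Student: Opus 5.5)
The plan is to reduce both acceptance tests to inequalities on the same scalar quantities $F_n$ and $G_n$, and then compare them using the decomposition of Proposition~\ref{thm:heterogeneity-bonus}. The one subtlety is that the Proposition assumes $L_n>0$, whereas the corollary should hold for every prefix. So I would first prove an inequality version that needs no assumption:
\[
G_n \;\ge\; F_n + B_n .
\]
The argument is short. Since $L_n=\max\{0,\sum_{j\le n}c_{(j)}-(n-1)\}\ge \sum_{j\le n}c_{(j)}-(n-1)$, we get
\[
G_n \;\ge\; \sum_{j=1}^{n}c_{(j)}-(n-1)-(1-c_{(n)}) \;=\; \sum_{j=1}^{n}c_{(j)}+c_{(n)}-n .
\]
Splitting $\sum_{j\le n}c_{(j)}=n\,c_{(n)}+\sum_{j<n}(c_{(j)}-c_{(n)})$ turns the right-hand side into $(n+1)c_{(n)}-n+B_n=F_n+B_n$. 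This is exactly the algebra behind Proposition~\ref{thm:heterogeneity-bonus}, with the $\max\{0,\cdot\}$ dropped. When $L_n>0$ the inequality is an equality.

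Next, I would rewrite the factor test in terms of $F_n$. With $f=1-\delta$, the condition $(n+1)(1-c_{(n)})<f$ is equivalent to $(n+1)-(n+1)c_{(n)}<1-\delta$. Rearranging gives $\delta<(n+1)c_{(n)}-n=F_n$. So matched factor accepts the prefix of size $n$ if and only if $F_n>\delta$. This is the same manipulation as in Corollary~\ref{cor:factor-reduction}, but with $c_{(n)}$ in place of a common $c$.

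Dominance then follows directly. If factor accepts, then $F_n>\delta$, and since $B_n\ge0$ (the profile is sorted in decreasing order), $G_n\ge F_n+B_n\ge F_n>\delta$, so Fréchet accepts. For the strict part, assume $F_n\le\delta<F_n+B_n$. Factor rejects because $F_n\le\delta$. Fréchet accepts because $G_n\ge F_n+B_n>\delta$. This window is nonempty exactly when $B_n>0$, that is, when the profile is non-flat by Proposition~\ref{thm:heterogeneity-bonus}.

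Finally, I would note that per-prefix dominance carries over to the selected sizes. Each rule commits the largest accepted $n$, so the Fréchet choice $n^\star$ is at least the factor choice. The only step needing care is the first one: the corollary should not rely on the hypothesis $L_n>0$. The $\max$ makes $G_n$ larger, never smaller, so the inequality goes in the favourable direction and the obstacle disappears.
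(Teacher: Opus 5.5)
Your proof is correct and follows the paper's argument: you rewrite the matched factor test as $F_n>\delta$, then use $G_n=F_n+B_n$ with $B_n\ge 0$ to get both the dominance claim and the strict-separation claim. Your one refinement, proving $G_n\ge F_n+B_n$ without assuming $L_n>0$, is a worthwhile tidying-up. The paper applies the equality from Proposition~\ref{thm:heterogeneity-bonus} without checking that hypothesis, although it does hold automatically here whenever $\delta\ge 0$ and either $F_n>\delta$ or $\delta<F_n+B_n$.
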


\paragraph{Practical takeaway.}
At matched aggressiveness $f=1-\delta$, factor acceptance implies Fr\'{e}chet acceptance. Strict separation occurs only when the heterogeneity bonus is large enough to cross the decision boundary, i.e.\ $F_n\le \delta < F_n+B_n$. Thus the benefit of Fr\'{e}chet is not automatic for every non-flat profile; it appears when the bonus is large enough to change the accept/reject decision.

\begin{remark}[Adaptive-factor interpretation]
\label{cor:adaptive-factor}
Fr\'{e}chet decoding with margin $\delta$ is equivalent to factor decoding with a data-dependent effective factor
\begin{equation}
f_{\mathrm{eff}}(n) = 1-\delta+B_n,
\label{eq:adaptive_factor}
\end{equation}
whose aggressiveness increases when the confidence profile is heterogeneous.
\end{remark}

This provides the clearest intuition: threshold decoding uses a fixed cutoff; factor decoding is set-size-aware; Fr\'{e}chet decoding is \emph{profile-aware} and \emph{data-adaptive}, automatically becoming more aggressive when the evidence supports it.

\subsection{Dependence-Aware Extensions}
\label{sec:dep_aware}

Fr\'{e}chet is intentionally marginal-only. The following stability result is not used by Fast-dLLM++, but clarifies how stronger guarantees would become possible if additional dependence information were available.

This marginal-only perspective is related to the broader statistical literature on dependence modeling, where copulas separate marginal behavior from joint dependence structure. Copula models have been used for high-dimensional clustering and dependency-based subtyping~\citep{kasa2020gaussian}, improved inference for Gaussian mixture copula models~\citep{kasa2022improved}, and dependence-aware sequential decision making~\citep{kasa2021dependency}. Work on dependency breakdown further highlights that assumptions about stable joint structure can fail under distributional stress~\citep{kasa2021statistical}. Our use of Fr\'{e}chet bounds is intentionally more conservative: rather than estimating a copula or a dependence model, we ask what can be certified from marginal confidences alone.

Theorem~\ref{thm:frechet-greedy} is exact over the worst case of all joint distributions consistent with the marginals (the Fr\'{e}chet class). When additional information about the dependence structure is available, sharper guarantees are possible.

Here $\Delta_Q$ is the mode gap of the product approximation: it measures how much more probability $Q_S$ assigns to the product-mode tuple $x_S^\star$ than to the best competing tuple. A large $\Delta_Q$ means the product approximation has a clear winner.

\begin{lemma}[Mode stability under total variation]
\label{thm:tv-stability}
Let $Q_S(z\mid E)=\prod_{i\in S}p_i(z_i\mid E)$ be the product-of-marginals approximation with mode gap
\begin{equation}
\Delta_Q := Q_S(x_S^\star\mid E) - \max_{z\neq x_S^\star}Q_S(z\mid E).
\label{eq:mode_gap}
\end{equation}
Assume $\Delta_Q>0$, so that $x_S^\star$ is the unique mode of $Q_S$.
If $d_{\TV}(P_S,Q_S)<\Delta_Q/2$, then $x_S^\star$ is also the unique maximizer of $P_S(\cdot\mid E)$.
\end{lemma}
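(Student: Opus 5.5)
The plan is to turn the total-variation hypothesis into a pointwise bound and compare probabilities tuple by tuple. On the finite space $\mathcal{V}^n$, total variation is $d_{\TV}(P_S,Q_S)=\sup_{A}|P_S(A)-Q_S(A)|=\tfrac12\sum_z|P_S(z\mid E)-Q_S(z\mid E)|$. Taking singleton events $A=\{z\}$ in the supremum gives, for every tuple $z\in\mathcal{V}^n$,
\[
|P_S(z\mid E)-Q_S(z\mid E)|\le d_{\TV}(P_S,Q_S)=:\eta .
\]
By hypothesis $\eta<\Delta_Q/2$.

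Next, fix any competitor $z\neq x_S^\star$. I will chain the pointwise bounds through $Q_S$:
\[
P_S(x_S^\star\mid E)\ \ge\ Q_S(x_S^\star\mid E)-\eta\ \ge\ Q_S(z\mid E)+\Delta_Q-\eta\ \ge\ P_S(z\mid E)+\Delta_Q-2\eta .
\]
The middle inequality is the definition of the mode gap $\Delta_Q$ in \eqref{eq:mode_gap}. Since $\Delta_Q-2\eta>0$, we get $P_S(x_S^\star\mid E)>P_S(z\mid E)$ for every $z\neq x_S^\star$. So $x_S^\star$ is the unique maximizer of $P_S(\cdot\mid E)$.

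There is no real obstacle here; the only point needing care is the normalization of $d_{\TV}$. With the $\tfrac12\ell_1$ convention above, the singleton bound gives exactly the factor $\Delta_Q/2$ in the statement. If one instead read $d_{\TV}$ as the unnormalized $\ell_1$ distance, the pointwise bound $|P_S(z)-Q_S(z)|\le d_{\TV}$ would still hold, and the argument would go through unchanged. I will state the convention explicitly.

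A sharper version is also available, though it is not needed. Because $x_S^\star\neq z$, the two deviations $P_S(x_S^\star)-Q_S(x_S^\star)$ and $Q_S(z)-P_S(z)$ have total magnitude at most $2\eta$ jointly. Using the event $\{x_S^\star\}$ versus $\{z\}$, the combined loss is in fact bounded by $2\eta$ overall.
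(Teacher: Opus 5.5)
Your proof is correct and is essentially the paper's own argument: both extract the pointwise bound $|P_S(z\mid E)-Q_S(z\mid E)|\le d_{\TV}(P_S,Q_S)$ from singleton events, then chain through $Q_S$ and the mode gap to obtain $P_S(x_S^\star\mid E)-P_S(z\mid E)\ge \Delta_Q-2\,d_{\TV}(P_S,Q_S)>0$ for every $z\neq x_S^\star$. Your closing ``sharper version'' remark does not actually improve the constant. Moving mass $\eta$ from $x_S^\star$ to $z$ shows that the $2\eta$ loss is attained, so the remark is unnecessary but harmless.
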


\begin{corollary}[Mode stability under KL divergence]
\label{cor:kl-stability}
If $D_{\KL}(P_S\|Q_S)<\Delta_Q^2/2$, then $x_S^\star$ is the unique maximizer of $P_S(\cdot\mid E)$.
\end{corollary}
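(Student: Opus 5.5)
The plan is to reduce Corollary~\ref{cor:kl-stability} to Lemma~\ref{thm:tv-stability} using Pinsker's inequality. No new argument about modes is needed: the KL hypothesis only has to imply the total-variation hypothesis of the lemma.

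First, recall Pinsker's inequality for two distributions $P_S,Q_S$ on the finite set $\mathcal{V}^n$:
\[
d_{\TV}(P_S,Q_S)\le \sqrt{\tfrac12 D_{\KL}(P_S\|Q_S)},
\]
with $d_{\TV}(P,Q)=\sup_A|P(A)-Q(A)|=\tfrac12\|P-Q\|_1$. This is the same convention under which the lemma's threshold $\Delta_Q/2$ is the natural one. The argument there is that each tuple's probability moves by at most $d_{\TV}$, so the gap between $x_S^\star$ and any competitor shrinks by at most $2d_{\TV}$.

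Second, substitute the hypothesis $D_{\KL}(P_S\|Q_S)<\Delta_Q^2/2$. This gives
\[
d_{\TV}(P_S,Q_S)<\sqrt{\Delta_Q^2/4}=\Delta_Q/2,
\]
using $\Delta_Q>0$, which is inherited from the standing assumption of Lemma~\ref{thm:tv-stability}. Strictness carries through because the square root is strictly increasing. Third, apply Lemma~\ref{thm:tv-stability} directly to conclude that $x_S^\star$ is the unique maximizer of $P_S(\cdot\mid E)$.

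There is essentially no hard step. The only point needing care is the normalization: the TV convention must match the one in Lemma~\ref{thm:tv-stability}. If the lemma used the unnormalized $\ell_1$ distance, Pinsker's bound would become $\|P-Q\|_1\le\sqrt{2D_{\KL}}$, and the constant in the corollary would change. I would therefore state the convention $d_{\TV}=\tfrac12\|\cdot\|_1$ explicitly and note that the corollary inherits the assumption $\Delta_Q>0$. If $Q_S$ has zeros where $P_S$ does not, then $D_{\KL}=\infty$ and the hypothesis fails, so no edge case arises.
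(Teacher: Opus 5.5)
Your proposal is correct and follows the paper's proof exactly: Pinsker's inequality turns $D_{\KL}(P_S\|Q_S)<\Delta_Q^2/2$ into $d_{\TV}(P_S,Q_S)<\Delta_Q/2$, and Lemma~\ref{thm:tv-stability} then gives the conclusion. Your normalization $d_{\TV}=\tfrac12\|\cdot\|_1$ is the same convention the paper uses in its proof of the lemma.
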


\paragraph{Intuition.}
Total variation bounds how much probability mass can change for any event when moving from $Q_S$ to $P_S$. The product-mode tuple can lose at most $d_{\TV}(P_S,Q_S)$ probability, while any competitor can gain at most the same amount. Therefore the mode gap can shrink by at most $2d_{\TV}(P_S,Q_S)$. If this is smaller than $\Delta_Q$, the winner under $Q_S$ cannot be overtaken under $P_S$.

Note that $D_{\KL}(P_S\|Q_S)$ is exactly the conditional total correlation of the token set. A small total correlation therefore guarantees that the factorized greedy choice is stable, providing a clean separation between marginal confidence and residual dependence. This connects our Fr\'{e}chet criterion (which is marginal-only and worst-case) to a richer dependence-aware regime: when the model's joint is close to its product of marginals, even tokens that fail the Fr\'{e}chet test may be safe to commit.

\paragraph{Extensions.}
The basic Fr\'{e}chet selector assumes reliable marginal confidences. Appendix~\ref{app:certified-frontier} gives a calibration-robust variant of the Fr\'{e}chet certificate, which accounts for possible model overconfidence by replacing reported confidences with conservative lower bounds.

\section{Experiments}
\label{sec:experiments}

We evaluate Fr\'{e}chet profile decoding as a drop-in replacement for
Fast-dLLM's threshold and factor rules on four benchmarks, three cache regimes,
and multiple generation lengths. 

\paragraph{Model and Hardware.}
We use LLaDA-8B-Instruct, a masked diffusion language model with 8B parameters.
All experiments use greedy decoding (argmax per position) with no temperature
sampling. All same-hardware comparisons use identical GPU configurations. We report
results on a single NVIDIA H100 (80\,GB) GPU.

\paragraph{Benchmarks.}
We evaluate on GSM8K~\cite{cobbe2021gsm8k} (5-shot, 8-shot for 1024-length), MATH~\cite{hendrycks2021math} (4-shot),
HumanEval~\cite{chen2021humaneval} (0-shot), and MBPP~\cite{austin2021mbpp} (3-shot). For GSM8K we report flexible-extract
accuracy; for MATH we report \texttt{math\_verify} accuracy; for HumanEval we report
pass@1 after code postprocessing; for MBPP we report pass@1.

\paragraph{Cache regimes.}
We test three configurations: (i)~\textbf{no cache} (nocache), where each
denoising step processes the full sequence; (ii)~\textbf{prefix cache} (pcache),
which caches the prompt KV states; and (iii)~\textbf{DualCache} (dcache), which
additionally caches previously decoded block states.

\paragraph{Baselines.}
We compare against Fast-dLLM's two decoding rules: \emph{threshold} (accept
tokens above a fixed confidence $\tau$) and \emph{factor} (accept $n$ tokens
when $(n+1)(1-c_{(n)}) < f$). For fair comparison, we use matched parameters:
Fr\'{e}chet margin $\delta$ is compared against factor $f = 1 - \delta$.

\paragraph{Parameter selection and sensitivity.}Following Fast-dLLM, we use a single global operating point for each selector
in the main tables: threshold $\tau=0.9$, factor $f=0.75$, and Fr\'{e}chet
margin $\delta=0.25$, unless otherwise stated. These values are chosen from
development sweeps as robust accuracy--throughput operating points and are kept
fixed across benchmarks, cache regimes, and generation lengths.
Figure~\ref{fig:pareto-frontier} reports the full sweep, and the analysis subsection discusses task-dependent margin sensitivity.

\paragraph{Memory and implementation overhead.}
Fast-dLLM++ changes only the token-selection rule. It adds no model parameters, no additional KV cache, and no persistent hidden-state storage beyond the underlying Fast-dLLM cache mode. Relative to factor decoding, it operates on the same per-position confidence vector and requires only sorting and prefix sums over the active block. Thus the method has no additional persistent memory overhead and negligible compute overhead compared with a denoising forward pass.

\paragraph{Main results.} Table~\ref{tab:main-results} summarizes the main comparisons across four
benchmarks at generation lengths 256 and 512. Threshold decoding ($\tau=0.9$)
is the primary decoding rule proposed in Fast-dLLM; we report improvements
relative to this state-of-the-art baseline. More examples can be found in Appendix~\ref{app:text-generation-qualitative}.

\begin{table}[t]
\centering
\scriptsize
\setlength{\tabcolsep}{1pt}
\caption{
{\bf Benchmark results on LLaDA-8B-Instruct.}  (PrefixCache, block size 32, H100).
Throughput improvements are reported as speedup over threshold decoding
($\tau=0.9$), the primary Fast-dLLM decoding rule. NFE reduction is the
percentage decrease in total model function evaluations relative to threshold.
Fr\'{e}chet uses margin $\delta=0.25$.
}
\label{tab:main-results}
\resizebox{\linewidth}{!}{
\begin{tabular}{@{}llccccc@{}}
\toprule
Dataset & Len. & Method & Acc.\ (\%) & Tok/s $\uparrow$ & NFE $\downarrow$ & Tok/NFE \\
\midrule
\multirow{3}{*}{GSM8K (5-shot)}
& 256 & Threshold & 77.6 & 73.8 (\speedup{1.00}) & 107{,}135 & 2.88 \\
& 256 & Factor & 78.1 & 96.0 (\speedup{1.30}) & 79{,}047 (\nfered{26.2}) & 3.90 \\
& 256 & Fr\'{e}chet & 77.2 & \textbf{103.8} (\speedup{1.41}) & \textbf{72{,}881} (\nfered{32.0}) & \textbf{4.24} \\
\midrule
\multirow{3}{*}{MATH (4-shot)}
& 256 & Threshold & 33.1 & 74.4 (\speedup{1.00}) & 503{,}377 & 2.48 \\
& 256 & Factor & 32.7 & 97.3 (\speedup{1.31}) & 379{,}330 (\nfered{24.6}) & 3.29 \\
& 256 & Fr\'{e}chet & 32.5 & \textbf{102.5} (\speedup{1.38}) & \textbf{358{,}178} (\nfered{28.8}) & \textbf{3.48} \\
\midrule
\multirow{3}{*}{HumanEval (0-shot)}
& 256 & Threshold & 40.2 & 77.8 (\speedup{1.00}) & 13{,}666 & 2.87 \\
& 256 & Factor & 40.7 & 89.6 (\speedup{1.15}) & 10{,}538 (\nfered{22.9}) & 3.78 \\
& 256 & Fr\'{e}chet & 40.9 & \textbf{107.7} (\speedup{1.38}) & \textbf{9{,}740} (\nfered{28.7}) & \textbf{4.06} \\
\midrule
\multirow{3}{*}{MBPP (3-shot)}
& 256 & Threshold & 27.4 & 66.0 (\speedup{1.00}) & 34{,}975 & 2.41 \\
& 256 & Factor & 21.4 & 81.6 (\speedup{1.24}) & 26{,}870 (\nfered{23.2}) & 3.16 \\
& 256 & Fr\'{e}chet & 25.4 & \textbf{85.4} (\speedup{1.29}) & \textbf{25{,}791} (\nfered{26.3}) & \textbf{3.34} \\
\midrule
\multirow{3}{*}{GSM8K (5-shot)}
& 512 & Threshold & 76.5 & 45.3 (\speedup{1.00}) & 129{,}791 & 2.75 \\
& 512 & Factor & 74.6 & 53.3 (\speedup{1.18}) & 99{,}580 (\nfered{23.3}) & 3.61 \\
& 512 & Fr\'{e}chet & 75.6 & \textbf{59.4} (\speedup{1.31}) & \textbf{91{,}239} (\nfered{29.7}) & \textbf{3.90} \\
\midrule
\multirow{3}{*}{MATH (4-shot)}
& 512 & Threshold & 36.1 & 56.4 (\speedup{1.00}) & 764{,}793 & 2.83 \\
& 512 & Factor & 35.3 & 69.1 (\speedup{1.23}) & 585{,}176 (\nfered{23.5}) & 3.70 \\
& 512 & Fr\'{e}chet & 35.5 & \textbf{77.7} (\speedup{1.38}) & \textbf{545{,}993} (\nfered{28.6}) & \textbf{3.96} \\
\midrule
\multirow{3}{*}{HumanEval (0-shot)}
& 512 & Threshold & 41.5 & 54.1 (\speedup{1.00}) & 27{,}366 & 2.80 \\
& 512 & Factor & 41.7 & 70.3 (\speedup{1.30}) & 20{,}463 (\nfered{25.2}) & 3.75 \\
& 512 & Fr\'{e}chet & 41.5 & \textbf{75.5} (\speedup{1.40}) & \textbf{18{,}909} (\nfered{30.9}) & \textbf{4.05} \\
\midrule
\multirow{3}{*}{MBPP (3-shot)}
& 512 & Threshold & 14.2 & 60.8 (\speedup{1.00}) & 59{,}999 & 2.50 \\
& 512 & Factor & 12.0 & 80.0 (\speedup{1.32}) & 45{,}186 (\nfered{24.7}) & 3.33 \\
& 512 & Fr\'{e}chet & 14.2 & \textbf{82.7} (\speedup{1.36}) & \textbf{42{,}893} (\nfered{28.5}) & \textbf{3.49} \\
\bottomrule
\end{tabular}
}
\end{table}

\paragraph{Efficiency gains.}
Fast-dLLM's threshold decoding ($\tau=0.9$) is the state-of-the-art parallel
decoding rule for diffusion LLMs. Across all eight dataset--length settings,
Fr\'{e}chet profile decoding improves throughput by \textbf{1.36$\times$} on
average over threshold decoding while reducing total NFE by \textbf{29.2\%},
with only a 0.48-point average accuracy change. Relative to the LLaDA-8B
full-step baseline (no early stopping), Fr\'{e}chet achieves
\textbf{4.31$\times$} average throughput and \textbf{79.1\%} NFE reduction,
demonstrating that profile-aware selection extracts substantially more safe
parallelism than both the full-step baseline and the existing threshold rule.
Figure~\ref{fig:pareto-frontier} visualizes the full accuracy--throughput
frontier on GSM8K by sweeping the selector parameter for each method:
Fr\'{e}chet margin $\delta\in[0,0.30]$, matched factor $f=1-\delta$, and
threshold $\tau\in[0.5,0.9]$. Fr\'{e}chet shifts the matched-factor frontier to the right: at matched aggressiveness $f=1-\delta$, it consistently achieves higher throughput, with the accuracy gap shrinking in the conservative regime. We therefore interpret Fr\'{e}chet as improving the throughput--accuracy trade-off rather than strictly dominating every baseline point. This confirms that the heterogeneity bonus
$B_n$ (Proposition~\ref{thm:heterogeneity-bonus}) translates into a measurable
throughput advantage across the entire operating range beyond a single parameter setting.

\begin{figure}[ht]
\centering
\includegraphics[width=\linewidth]{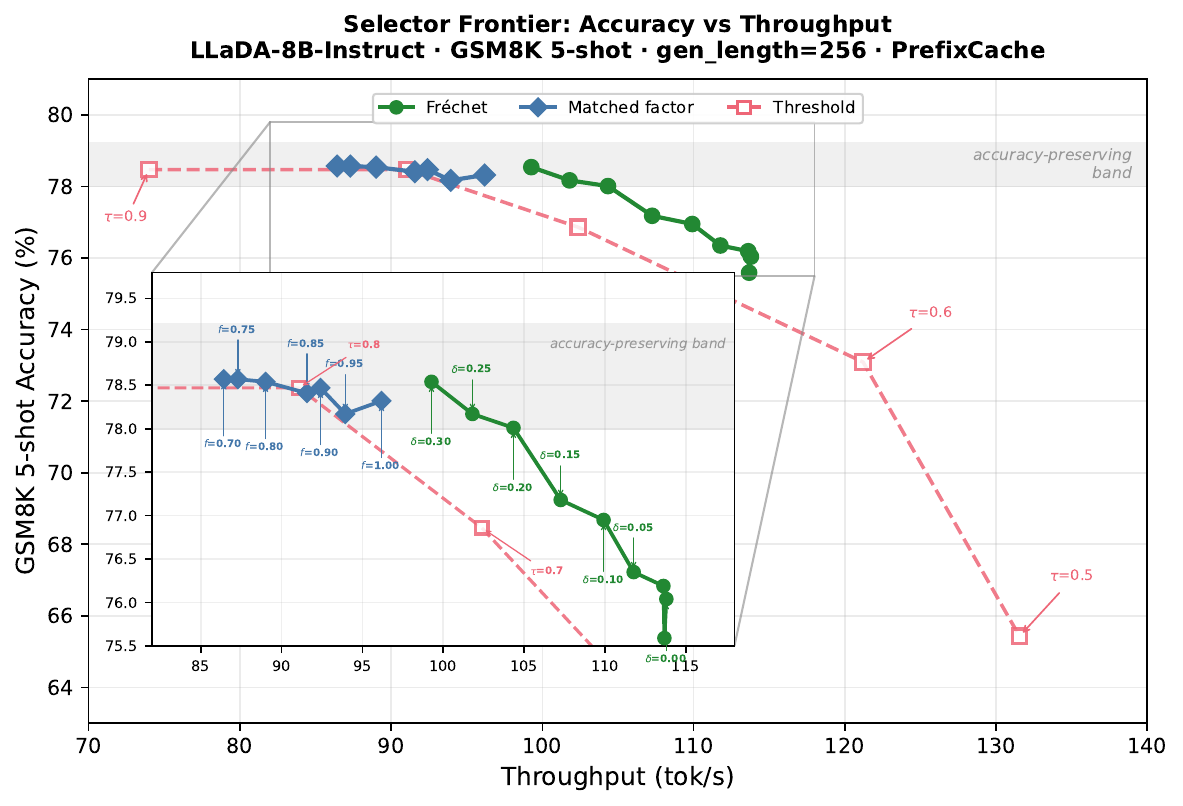}
\captionsetup{width=\linewidth,font=small}
\caption{\textbf{Accuracy--throughput frontier on GSM8K.}
Fr\'{e}chet shifts the matched-factor frontier toward higher throughput,
especially in the conservative regime.}
\label{fig:pareto-frontier}
\vspace{-0.5\baselineskip}
\end{figure}

\paragraph{Impact of generation length.}
Table~\ref{tab:gen-length-scaling} shows how throughput scales with generation
length (256, 512, 1024) under 8-shot GSM8K for LLaDA-8B. The original
Fast-dLLM paper reports only threshold decoding in this setting; we extend the comparison to factor and Fr\'{e}chet. Fr\'{e}chet consistently achieves the highest throughput at every generation length and cache mode, with 5--20\% speedup over factor and 20--38\% over threshold. The NFE reduction is 6--8\% vs.\ factor and 25--33\% vs.\ threshold across all settings, confirming that the heterogeneity bonus scales with generation length.

\begin{table}[t]
\centering
\small
\setlength{\tabcolsep}{4pt}
\caption{
{\bf Impact of generation length on accuracy and throughput under 8-shot GSM8K.} Fr\'{e}chet achieves the highest throughput and
lowest NFE at every generation length and cache mode. 
All runs use LLaDA-8B with block size 32 on a single H100 GPU.
}
\label{tab:gen-length-scaling}
\resizebox{\linewidth}{!}{
\begin{tabular}{clcccccc}
\toprule
& & \multicolumn{3}{c}{\textbf{PrefixCache}} & \multicolumn{3}{c}{\textbf{DualCache}} \\
\cmidrule(lr){3-5} \cmidrule(lr){6-8}
\textbf{Len.} & \textbf{Metric}
& Threshold & Factor & Fr\'{e}chet
& Threshold & Factor & Fr\'{e}chet \\
\midrule
\multirow{3}{*}{256}
& Acc.\ (\%) & 77.0 & 75.7 & 76.4 & 77.3 & 75.7 & 75.4 \\
& Tok/s      & 69.8 & 90.8 & \textbf{96.1} & 56.4 & 75.4 & \textbf{80.9} \\
& NFE        & 109,644 & 80,641 & \textbf{74,289} & 115,196 & 85,283 & \textbf{78,901} \\
\midrule
\multirow{3}{*}{512}
& Acc.\ (\%) & 76.1 & 76.4 & 77.6 & 75.2 & 73.8 & 75.8 \\
& Tok/s      & 37.8 & 43.9 & \textbf{49.2} & 40.1 & 45.9 & \textbf{50.4} \\
& NFE        & 132,492 & 101,083 & \textbf{93,936} & 139,106 & 105,227 & \textbf{102,145} \\
\midrule
\multirow{3}{*}{1024}
& Acc.\ (\%) & 77.3 & 76.7 & 77.3 & 77.3 & 76.7 & 78.0 \\
& Tok/s      & 32.4 & 38.2 & \textbf{38.7} & 34.6 & 39.8 & \textbf{40.8} \\
& NFE        & 31,918 & 25,547 & \textbf{24,047} & 33,308 & 26,609 & \textbf{25,172} \\
\bottomrule
\end{tabular}
}
\end{table}

\paragraph{Why can accuracy be preserved despite greater speed?}
Fr\'{e}chet does not merely commit more tokens; it commits more tokens only
when the full confidence profile provides a stronger joint-correctness
certificate. In settings where the model's confidences are calibrated and
the selected tokens are weakly dependent, this can reduce NFE without
increasing commit errors. When these assumptions fail, especially under
overconfidence or strong syntactic coupling, the method can overcommit; this
is precisely the failure mode addressed by the robust and verifier-calibrated
analysis in Appendix~\ref{app:certified-frontier}.

\begin{table}[h!]
\centering
\small
\caption{\textbf{Performance and Speedup Comparison of LLaDA-V on MathVista and MathVerse.}
We compare full-step decoding, half-step decoding, Fast-dLLM, and our method. 
Throughput is reported with speedup relative to full-step decoding.}
\setlength{\tabcolsep}{6pt}
\resizebox{\linewidth}{!}{
\begin{tabular}{lllll}
\toprule
\multirow{2}{*}{\textbf{Method}} 
& \multicolumn{2}{c}{\textbf{MathVista}} 
& \multicolumn{2}{c}{\textbf{MathVerse}} \\
\cmidrule(lr){2-3} \cmidrule(lr){4-5}
& Acc. (\%) & Throughput
& Acc. (\%) & Throughput \\
\midrule

Full Steps  & 59.2 
            & {2.84} (\speedup{1})
            & 28.5 
            & {2.75} (\speedup{1}) \\

Half Steps  & {59.7} 
            & {5.56} (\speedup{1.96}) 
            & 28.3 
            & {5.17} (\speedup{1.88}) \\

Fast-dLLM   & 56.6 
            & {28.2} (\speedup{9.9}) 
            & {28.6}
            & {23.3} (\speedup{8.5}) \\

{\bf Ours}  & 56.8
            & {\bf 32.9} (\speedup{11.6}) 
            & 27.9
            & {\bf 23.8} (\speedup{8.66}) \\

\bottomrule
\end{tabular}
}
\label{tab:lladav_main_results}
\end{table}

\paragraph{Results on multimodal dLLMs.}
We further evaluate Fast-dLLM++ on LLaDA-V to test whether profile-aware decoding transfers to multimodal diffusion language models. We use MathVista and MathVerse, two multimodal mathematical reasoning benchmarks designed to stress visual and compositional reasoning~\citep{lu2024mathvista,zhang2024mathverse}. On MathVista, Fast-dLLM++ improves throughput from 28.2 to 32.9 tokens/s over Fast-dLLM, increasing speedup from 9.9× to 11.6× relative to full-step decoding with slightly improving accuracy. On MathVerse, Fast-dLLM++ provides a smaller efficiency gain, improving throughput from 23.3 to 23.8 tokens/s, while accuracy decreases from 28.6\% to 27.9\%.

\definecolor{baselinecolor}{RGB}{217, 95, 2} %
\definecolor{ourscolor}{RGB}{27, 119, 184}   %
\definecolor{bestcolor}{RGB}{35, 139, 69}    %

\begin{figure}[htbp]
\centering
\resizebox{\linewidth}{!}{
\newcommand{\figscale}{1.50}
\begin{tikzpicture}[scale=\figscale, every node/.style={scale=\figscale}]

\begin{axis}[
    name=plot1,
    width=12cm, height=9.5cm,
    xlabel={Fr\'echet Margin $\delta$},
    xmin=-0.02, xmax=0.32,
    xtick={0.00, 0.05, 0.10, 0.15, 0.20, 0.30},
    axis y line*=left,
    ylabel={Speedup vs.\ full-step (64 NFE)},
    ymin=1.48, ymax=1.80,
    ytick={1.50, 1.55, 1.60, 1.65, 1.70, 1.75, 1.80},
    yticklabel={$\pgfmathprintnumber{\tick}\times$},
    ylabel style={ourscolor},
    yticklabel style={ourscolor},
    grid=major,
    grid style={dashed, gray!30},
    title={\textbf{(a) Speedup and Accuracy vs.\ Fr\'echet margin}},
    title style={font=\large},
    label style={font=\large},
    tick label style={font=\small},
    legend style={
        legend pos=north east,
        legend columns=1,
        font=\footnotesize,
        draw=gray!0,
        /tikz/every even column/.append style={column sep=0.4cm},
    },
    legend cell align={left},
]

\addplot[domain=-0.02:0.32, samples=2, ourscolor, thick, dashed, forget plot] {1.52};
\node[anchor=west, font=\small, text=ourscolor, rotate=0]
    at (axis cs:-0.02, 1.505) {baseline: $1.52\times$};

\addplot[ourscolor, thick, mark=*, mark size=3pt,line width=2pt] coordinates {
    (0.00, 1.76)
    (0.05, 1.72)
    (0.10, 1.68)
    (0.15, 1.65)
    (0.20, 1.61)
    (0.30, 1.54)
};
\addlegendentry{Speedup - Fast-dLLM++}

\end{axis}

\begin{axis}[
    at=(plot1.south west), anchor=south west,
    width=12cm, height=9.5cm,
    xmin=-0.02, xmax=0.32,
    xtick=\empty,
    axis y line*=right,
    axis x line=none,
    ylabel={Accuracy $\uparrow$},
    ymin=0.318, ymax=0.335,
    ytick={0.320, 0.325, 0.330, 0.335},
    yticklabel style={/pgf/number format/.cd, fixed, precision=3, /tikz/.cd, bestcolor},
    ylabel style={bestcolor},
    label style={font=\large},
    tick label style={font=\small},
    legend style={
        at={(0.693,0.90)}, anchor=north,
        legend columns=1,
        font=\footnotesize,
        draw=gray!0,
        /tikz/every even column/.append style={column sep=0.4cm},
    },
    legend cell align={left},
]

\addplot[domain=-0.02:0.32, samples=2, bestcolor, thick, dashed, forget plot] {0.3299};
\node[anchor=west, font=\small, text=bestcolor, rotate=0]
    at (axis cs:-0.02, 0.329) {baseline: $0.329$};
\addplot[bestcolor, thick, mark=square*, mark size=3pt,line width=2pt] coordinates {
    (0.00, 0.3223)
    (0.05, 0.3211)
    (0.10, 0.3261)
    (0.15, 0.3287)
    (0.20, 0.3312)
    (0.30, 0.3287)
};
\addlegendentry{Accuracy - Fast-dLLM++}

\end{axis}

\end{tikzpicture}
}
    \caption{{\bf Ablation study of Fr\'echet margin.} We evaluate Fast-dLLM++ on MathVista, a multimodal math reasoning using LLaDA-V~\cite{you2025lladav}.}
    \label{fig:lladav_ablation}
\end{figure}

Figure~\ref{fig:lladav_ablation} studies the effect of the Fr\'{e}chet margin $\delta$ on MathVista with LLaDA-V.
The results show a clear monotonic efficiency trend. As $\delta$ increases, speedup decreases and average NFE increases, confirming that larger margins make decoding more conservative. 
$\delta\approx0.15 - 
0.20$ offers the best balance, matching or slightly exceeding the baseline accuracy while preserving substantial speedup. This supports using a moderate margin as the default setting for multimodal mathematical reasoning.

\subsection{Analysis}

\paragraph{The heterogeneity bonus in practice.}
The theoretical heterogeneity bonus $B_n$ (Proposition~\ref{thm:heterogeneity-bonus})
predicts that Fr\'{e}chet decoding gains the most when the confidence profile is
uneven. We observe this empirically: the largest throughput improvements occur on
GSM8K and HumanEval under prefix cache, where the model produces a mix of
high-confidence function words and moderate-confidence reasoning tokens---exactly
the heterogeneous regime where $B_n$ is large.

Figure~\ref{fig:tokens-per-step} visualizes this effect at the step level. On
each denoising step, Fr\'{e}chet commits more tokens than matched factor because
the heterogeneity bonus $B_n$ allows it to accept larger prefixes when the top
tokens are much more confident than the weakest selected token. The extra tokens
per step accumulate over the decoding trajectory, reducing total NFE by 6--8\%
and producing the throughput gains reported in Table~\ref{tab:main-results}.

\paragraph{Margin sensitivity.}
The optimal margin $\delta$ varies by task: GSM8K prefers $\delta \in [0.25, 0.30]$,
HumanEval prefers $\delta \in [0.20, 0.25]$, and MBPP is best with small margins
$\delta \in [0.02, 0.20]$. This is consistent with the adaptive-factor
interpretation (Remark~\ref{cor:adaptive-factor}): tasks with more
heterogeneous confidence profiles benefit from tighter margins that let the
heterogeneity bonus do the work. 

\begin{figure}[ht]
    \centering
    \includegraphics[width=\linewidth]{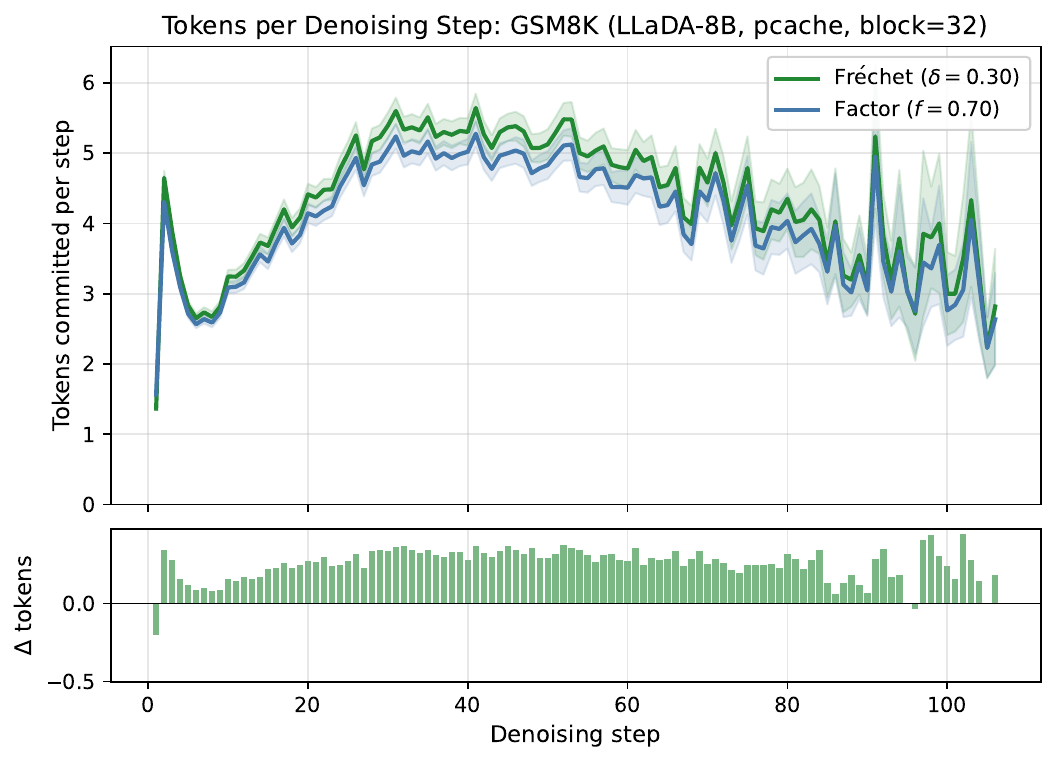}
    \caption{\textbf{Inference dynamics of tokens committed per step.} Fr\'{e}chet commits more tokens per denoising step than matched factor on GSM8K.}
    \label{fig:tokens-per-step}
\end{figure}
\vspace{-1em}
\paragraph{Compatibility with caching.}
Fr\'{e}chet decoding composes cleanly with all three cache regimes. The selection rule operates on the confidence vector produced by a single forward pass and does not interact with the cache implementation, confirming Fast-dLLM++ as a true drop-in replacement.

\begin{figure}[t]
    \centering
    \includegraphics[width=\linewidth]{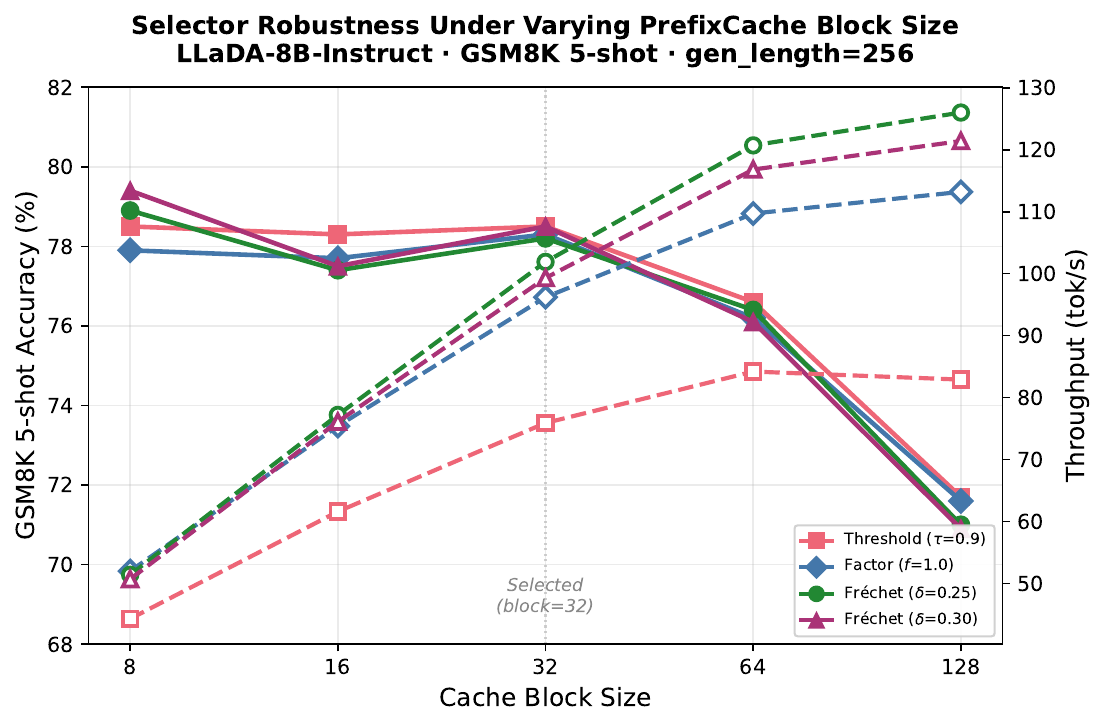}
    \vspace{-0.5em}
    \captionof{figure}{\textbf{Block size ablation.}
    Fr\'{e}chet achieves higher throughput than factor and threshold across block sizes.}
    \label{fig:block-ablation}
\end{figure}

\paragraph{Generalization to Dream.}
We also evaluate on Dream-v0-Base-7B (Appendix~\ref{app:dream-results}). Fr\'{e}chet achieves 1.28$\times$ average throughput over threshold with 21.1\% NFE reduction, confirming that the profile-aware advantage transfers across diffusion LM architectures.

\section{Limitations and Conclusion}
\label{sec:limitations}

\paragraph{Limitations.}
Fr\'{e}chet certificate is the strongest distribution-free guarantee available from marginals alone, but tasks with strong syntactic or semantic coupling may require stricter margins or future dependence-aware extensions. When confidences are miscalibrated or selected tokens are strongly coupled, especially in syntactically constrained generation, profile-aware decoding may overcommit. Stronger guarantees would require calibrated confidences or explicit dependence estimates. Our analysis focuses on greedy decoding, where each position contributes only its marginal argmax token and top-1 confidence. Extending profile-aware certificates to temperature sampling, nucleus sampling, or top-\(k\) token sets remains open; one possible direction is to certify set-valued correctness events rather than equality with the joint greedy mode. The selector also ignores other distributional signals, such as entropy, top-\(k\) gaps, and the full softmax shape, which may provide useful information for commit decisions. Finally, our dependence-aware stability result (Lemma~\ref{thm:tv-stability}) suggests a path beyond marginal-only selection: when selected tokens have small residual dependence, the product-mode decision is stable. Estimating such dependence could certify additional parallel commits and yield further speedups which is an interesting future work.

\paragraph{Conclusion.}
Fast-dLLM++ shows that selector design remains a key bottleneck in diffusion LLM inference. By replacing weakest-token selection with a Fr\'{e}chet profile certificate, it generalizes factor decoding and exploits heterogeneous confidence profiles. The resulting decoder is training-free, drop-in, and improves the throughput--accuracy frontier without modifying the model, diffusion process, or cache implementation.

\newpage

\section*{Impact Statement}

This paper presents work whose goal is to improve the inference efficiency of diffusion large language models through a training-free decoding method. By improving the accuracy--throughput trade-off without modifying model weights or training data, Fast-dLLM++ may reduce computational cost and latency for language-model deployment. The societal implications are therefore broadly aligned with those of efficient generative AI systems: lower inference cost may make such models more accessible, and we do not identify additional ethical risks beyond those generally associated with advancing efficient machine learning and language generation.

\bibliography{ref}
\bibliographystyle{icml2026}

\newpage
\appendix
\onecolumn

\section{Full Proofs}
\label{app:proofs}

\paragraph{Notation.}
Throughout this appendix, $E$ denotes the evidence event (i.e., the conditioning event representing all observed tokens and the current diffusion state). For a candidate set $S = \{i_1, \ldots, i_n\}$ of masked positions, we define the events $A_j := \{X_{i_j} = x_{i_j}^\star\}$, where $x_{i_j}^\star$ is the marginal greedy token at position $i_j$. The marginal confidence at position $i_j$ is $c_j := \Pr(A_j \mid E) = \Pr(X_{i_j} = x_{i_j}^\star \mid E)$. We write $c_{(1)} \ge c_{(2)} \ge \cdots \ge c_{(n)}$ for the sorted (order-statistic) confidences. All probabilities are conditional on $E$ unless stated otherwise; we sometimes suppress the conditioning for brevity.


\begin{lemma}[Fr\'{e}chet--Bonferroni lower bound]
\label{lem:frechet-bonferroni}
Let $A_1, \ldots, A_n$ be events in a probability space $(\Omega, \mathcal{F}, \Pr)$. Then
\begin{equation}
\Pr\!\left(\bigcap_{j=1}^n A_j\right) \ge \max\!\left\{0,\; \sum_{j=1}^n \Pr(A_j) - (n-1)\right\}.
\label{eq:frechet_bonferroni_lemma}
\end{equation}
\end{lemma}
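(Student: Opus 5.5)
The plan is to pass to complements and apply the union bound (Boole's inequality). Since the right-hand side of \eqref{eq:frechet_bonferroni_lemma} is a maximum of two terms, it suffices to show two separate inequalities: $\Pr(\bigcap_j A_j)\ge 0$ and $\Pr(\bigcap_j A_j)\ge \sum_j \Pr(A_j)-(n-1)$. The first holds trivially by nonnegativity of probability measures, so all the work is in the second.

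For the second inequality, De Morgan's law gives
\[
\Big(\bigcap_{j=1}^n A_j\Big)^c=\bigcup_{j=1}^n A_j^c,
\]
so
\[
\Pr\Big(\bigcap_{j=1}^n A_j\Big)=1-\Pr\Big(\bigcup_{j=1}^n A_j^c\Big).
\]
Next I apply Boole's inequality $\Pr(\bigcup_j A_j^c)\le\sum_j \Pr(A_j^c)$. For completeness, this can be proved by induction on $n$ using $\Pr(B\cup C)=\Pr(B)+\Pr(C)-\Pr(B\cap C)\le \Pr(B)+\Pr(C)$. Substituting $\Pr(A_j^c)=1-\Pr(A_j)$ then yields
\[
\Pr\Big(\bigcap_j A_j\Big)\ge 1-\sum_{j=1}^n\bigl(1-\Pr(A_j)\bigr)=\sum_{j=1}^n\Pr(A_j)-(n-1).
\]
Combining this with the nonnegativity bound proves the lemma.

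No step is a genuine obstacle; the only care needed is to note that the argument never uses independence or any other dependence structure among the $A_j$. That is why the bound holds distribution-free, uniformly over the Fr\'{e}chet class, which is exactly what Theorem~\ref{thm:frechet-greedy} needs when it applies the lemma conditionally on $E$. Optionally, I would also remark on sharpness, since the paper calls the bound sharp. When $\sum_j c_j\ge n-1$, one can arrange the complements $A_j^c$ as disjoint subsets of $[0,1]$ under Lebesgue measure, so the union bound holds with equality. Otherwise the complements can be chosen to cover the whole space, giving intersection probability $0$. This is not required for the inequality itself.
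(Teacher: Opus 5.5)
Your proposal is correct and follows the paper's proof essentially step for step: De Morgan passes to the union of complements, Boole's inequality with $\Pr(A_j^c)=1-\Pr(A_j)$ gives the linear bound, and nonnegativity handles the $\max\{0,\cdot\}$. Your optional sharpness remark also matches the paper's citation note following the lemma.
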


\begin{proof}
We prove this from first principles using the complement and a union bound.

\paragraph{Step 1: Express the intersection via complements.}
By De Morgan's law,
\begin{equation}
\Pr\!\left(\bigcap_{j=1}^n A_j\right) = 1 - \Pr\!\left(\bigcup_{j=1}^n A_j^c\right).
\label{eq:demorgan_step}
\end{equation}

\paragraph{Step 2: Apply the union bound (Boole's inequality).}
The union bound states that for any events $B_1, \ldots, B_n$,
\begin{equation}
\Pr\!\left(\bigcup_{j=1}^n B_j\right) \le \sum_{j=1}^n \Pr(B_j).
\label{eq:union_bound}
\end{equation}
Applying this to $B_j = A_j^c$:
\begin{equation}
\Pr\!\left(\bigcup_{j=1}^n A_j^c\right) \le \sum_{j=1}^n \Pr(A_j^c) = \sum_{j=1}^n \bigl(1 - \Pr(A_j)\bigr) = n - \sum_{j=1}^n \Pr(A_j).
\label{eq:union_bound_applied}
\end{equation}

\paragraph{Step 3: Combine.}
Substituting Eq.~\ref{eq:union_bound_applied} into Eq.~\ref{eq:demorgan_step}:
\begin{align}
\Pr\!\left(\bigcap_{j=1}^n A_j\right)
&= 1 - \Pr\!\left(\bigcup_{j=1}^n A_j^c\right) \notag \\
&\ge 1 - \left(n - \sum_{j=1}^n \Pr(A_j)\right) \notag \\
&= \sum_{j=1}^n \Pr(A_j) - (n-1).
\label{eq:frechet_lower_derived}
\end{align}

\paragraph{Step 4: Non-negativity.}
Since probabilities are non-negative, $\Pr(\bigcap_{j=1}^n A_j) \ge 0$ always holds. Combining with Eq.~\ref{eq:frechet_lower_derived}:
\begin{equation}
\Pr\!\left(\bigcap_{j=1}^n A_j\right) \ge \max\!\left\{0,\; \sum_{j=1}^n \Pr(A_j) - (n-1)\right\}.
\end{equation}
This completes the proof.
\end{proof}

\paragraph{Citation note.}
Lemma~\ref{lem:frechet-bonferroni} is the event-probability form of the classical Fr\'{e}chet--Hoeffding / Fr\'{e}chet--Bonferroni lower bound. It also follows directly from Boole's union bound applied to the complement events. The bound is sharp as a distribution-free inequality over events with fixed marginal probabilities: when the positive branch is active, equality is obtained when the complement events are disjoint up to null sets; when the positive branch is inactive, the intersection can be made empty. In copula terminology, the bivariate lower bound is $W(u,v)=\max\{u+v-1,0\}$, while the comonotonic copula gives the corresponding upper bound $\min\{u,v\}$.


\subsection{Proof of Theorem~\ref{thm:frechet-greedy}}

\begin{proof}
We prove that if $L_n > U_n$, then the tuple of marginal greedy tokens $x_S^\star$ is the unique maximizer of the true joint conditional $P_S(\cdot \mid E)$.

Recall the definitions: $A_j := \{X_{i_j} = x_{i_j}^\star\}$ with $\Pr(A_j \mid E) = c_{(j)}$ (after sorting), and
\begin{equation}
L_n = \max\!\left\{0,\; \sum_{j=1}^n c_{(j)} - (n-1)\right\}, \qquad U_n = 1 - c_{(n)}.
\end{equation}

\paragraph{Step 1: Lower bound on the correct tuple.}
The event $\{X_S = x_S^\star\}$ is exactly the intersection $\bigcap_{j=1}^n A_j$. Applying Lemma~\ref{lem:frechet-bonferroni} to the conditional probability measure $\Pr(\cdot \mid E)$:
\begin{align}
P_S(x_S^\star \mid E)
&= \Pr\!\left(\bigcap_{j=1}^n A_j \;\middle|\; E\right) \notag \\
&\ge \max\!\left\{0,\; \sum_{j=1}^n \Pr(A_j \mid E) - (n-1)\right\} \notag \\
&= \max\!\left\{0,\; \sum_{j=1}^n c_{(j)} - (n-1)\right\} = L_n.
\label{eq:proof_frechet_lower}
\end{align}
Since we assume $L_n > U_n \ge 0$, we have $L_n > 0$, so the max is achieved by the sum expression.

\paragraph{Step 2: Upper bound on any competitor.}
Fix any competing tuple $z \in \mathcal{V}^n$ with $z \neq x_S^\star$. Since $z \neq x_S^\star$, there exists at least one coordinate $k \in \{1, \ldots, n\}$ such that $z_k \neq x_{i_k}^\star$. Now observe that:
\begin{align}
P_S(z \mid E)
&= \Pr(X_{i_1} = z_1, \ldots, X_{i_n} = z_n \mid E) \notag \\
&\le \Pr(X_{i_k} = z_k \mid E) \tag{marginalizing out other coordinates} \\
&\le 1 - \Pr(X_{i_k} = x_{i_k}^\star \mid E) \tag{since $z_k \neq x_{i_k}^\star$ and $x_{i_k}^\star$ is the greedy argmax} \\
&= 1 - c_k \notag \\
&\le 1 - c_{(n)} = U_n.
\label{eq:proof_competitor_upper}
\end{align}
The second inequality uses the fact that $x_{i_k}^\star = \argmax_v \Pr(X_{i_k} = v \mid E)$, so for any $z_k \neq x_{i_k}^\star$, we have $\Pr(X_{i_k} = z_k \mid E) \le 1 - c_k$ (since the greedy token takes probability $c_k$ and the remaining probability $1 - c_k$ is shared among all other tokens). The last inequality uses $c_k \ge c_{(n)}$ (since $c_{(n)}$ is the minimum confidence).

\paragraph{Step 3: Conclude uniqueness.}
Combining Steps 1 and 2: under the assumption $L_n > U_n$,
\begin{equation}
P_S(x_S^\star \mid E) \ge L_n > U_n \ge P_S(z \mid E)
\end{equation}
for every $z \neq x_S^\star$. Therefore $x_S^\star$ is the unique maximizer of $P_S(\cdot \mid E)$ over $\mathcal{V}^n$.
\end{proof}


\subsection{Proof of Corollary~\ref{cor:factor-reduction}}

\begin{proof}
Assume equal confidences: $c_{(1)} = c_{(2)} = \cdots = c_{(n)} = c$ for some $c \in (0,1]$.

\paragraph{Step 1: Compute $L_n$ and $U_n$.}
Under equal confidences:
\begin{align}
L_n &= \max\!\left\{0,\; \sum_{j=1}^n c - (n-1)\right\} = \max\{0,\; nc - (n-1)\} = \max\{0,\; 1 - n(1-c)\}, \label{eq:Ln_equal_proof} \\
U_n &= 1 - c. \label{eq:Un_equal_proof}
\end{align}

\paragraph{Step 2: Compute the Fr\'{e}chet margin $G_n$.}
The Fr\'{e}chet margin is defined as $G_n := L_n - U_n$. Assuming $L_n > 0$ (which is necessary for the criterion to be active):
\begin{align}
G_n &= \bigl[nc - (n-1)\bigr] - (1-c) \notag \\
    &= nc - n + 1 - 1 + c \notag \\
    &= (n+1)c - n \notag \\
    &= 1 - (n+1)(1-c). \label{eq:Gn_equal_proof}
\end{align}

\paragraph{Step 3: Derive the acceptance condition.}
The Fr\'{e}chet criterion requires $G_n > \delta$:
\begin{align}
1 - (n+1)(1-c) &> \delta \notag \\
(n+1)(1-c) &< 1 - \delta \notag \\
1 - c &< \frac{1-\delta}{n+1} \notag \\
c &> 1 - \frac{1-\delta}{n+1}. \label{eq:frechet_equal_threshold}
\end{align}

\paragraph{Step 4: Compare with factor decoding.}
Fast-dLLM factor decoding with parameter $f$ accepts a prefix of size $n$ when $(n+1)(1 - c_{(n)}) < f$. Under equal confidences $c_{(n)} = c$, this becomes:
\begin{align}
(n+1)(1-c) &< f \notag \\
c &> 1 - \frac{f}{n+1}. \label{eq:factor_equal_threshold}
\end{align}
Comparing Eq.~\ref{eq:frechet_equal_threshold} and Eq.~\ref{eq:factor_equal_threshold}: setting $f = 1 - \delta$ makes the two conditions identical. This establishes the exact equivalence $\text{Fr\'{e}chet}(\delta) \equiv \text{Factor}(f = 1-\delta)$ in the equal-confidence case.
\end{proof}


\subsection{Proof of Proposition~\ref{thm:heterogeneity-bonus}}

\begin{proof}
We decompose the Fr\'{e}chet score $G_n = L_n - U_n$ into a factor core plus a heterogeneity bonus.

\paragraph{Step 1: Expand $G_n$.}
Assuming $L_n > 0$ (stated as a hypothesis of the theorem):
\begin{align}
G_n &= L_n - U_n \notag \\
    &= \left[\sum_{j=1}^n c_{(j)} - (n-1)\right] - (1 - c_{(n)}) \notag \\
    &= \sum_{j=1}^n c_{(j)} - n + 1 - 1 + c_{(n)} \notag \\
    &= \sum_{j=1}^n c_{(j)} + c_{(n)} - n. \label{eq:Gn_expanded_proof}
\end{align}

\paragraph{Step 2: Isolate the factor core.}
Write the sum as $\sum_{j=1}^n c_{(j)} = \sum_{j=1}^{n-1} c_{(j)} + c_{(n)}$. Substituting into Eq.~\ref{eq:Gn_expanded_proof}:
\begin{align}
G_n &= \sum_{j=1}^{n-1} c_{(j)} + c_{(n)} + c_{(n)} - n \notag \\
    &= \sum_{j=1}^{n-1} c_{(j)} + 2c_{(n)} - n. \label{eq:Gn_intermediate}
\end{align}
Now add and subtract $(n-1)c_{(n)}$:
\begin{align}
G_n &= \sum_{j=1}^{n-1} c_{(j)} - (n-1)c_{(n)} + (n-1)c_{(n)} + 2c_{(n)} - n \notag \\
    &= \sum_{j=1}^{n-1}\bigl(c_{(j)} - c_{(n)}\bigr) + (n+1)c_{(n)} - n. \label{eq:Gn_decomposed_proof}
\end{align}

\paragraph{Step 3: Identify the two terms.}
Define:
\begin{align}
F_n &:= (n+1)\,c_{(n)} - n, \label{eq:Fn_def_proof} \\
B_n &:= \sum_{j=1}^{n-1}\bigl(c_{(j)} - c_{(n)}\bigr). \label{eq:Bn_def_proof}
\end{align}
Then $G_n = F_n + B_n$ by Eq.~\ref{eq:Gn_decomposed_proof}.

\paragraph{Step 4: Non-negativity of $B_n$.}
Since the confidences are sorted in non-increasing order, $c_{(j)} \ge c_{(n)}$ for all $j \in \{1, \ldots, n-1\}$. Therefore each term $c_{(j)} - c_{(n)} \ge 0$, and the sum $B_n \ge 0$.

\paragraph{Step 5: Characterize equality.}
$B_n = 0$ if and only if $c_{(j)} - c_{(n)} = 0$ for all $j = 1, \ldots, n-1$, which holds if and only if $c_{(1)} = c_{(2)} = \cdots = c_{(n)}$, i.e., the confidence profile is flat (homogeneous).
\end{proof}


\subsection{Proof of Corollary~\ref{cor:matched-factor-dominance}}

\begin{proof}
We show two claims: (i) factor acceptance implies Fr\'{e}chet acceptance, and (ii) there exist configurations where Fr\'{e}chet accepts but factor does not.

\paragraph{Step 1: Factor acceptance implies Fr\'{e}chet acceptance.}
Factor decoding with matched parameter $f = 1 - \delta$ accepts a prefix of size $n$ when
\begin{equation}
(n+1)(1 - c_{(n)}) < f = 1 - \delta,
\end{equation}
which rearranges to $(n+1)c_{(n)} - n > \delta$, i.e., $F_n > \delta$.

By Proposition~\ref{thm:heterogeneity-bonus}, $G_n = F_n + B_n$ with $B_n \ge 0$. Therefore:
\begin{equation}
F_n > \delta \implies G_n = F_n + B_n \ge F_n > \delta.
\end{equation}
So Fr\'{e}chet also accepts the prefix (since the Fr\'{e}chet criterion is $G_n > \delta$).

\paragraph{Step 2: Strict separation.}
Fr\'{e}chet strictly accepts a prefix that factor rejects when:
\begin{equation}
F_n \le \delta \quad \text{(factor rejects)} \qquad \text{and} \qquad G_n = F_n + B_n > \delta \quad \text{(Fr\'{e}chet accepts)}.
\end{equation}
These two conditions hold simultaneously if and only if $F_n \le \delta < F_n + B_n$. This requires $B_n > 0$, which by Proposition~\ref{thm:heterogeneity-bonus} holds whenever the confidence profile is not flat. In this regime, the heterogeneity bonus $B_n$ provides exactly the additional margin needed for Fr\'{e}chet acceptance.
\end{proof}


\subsection{Proof of Lemma~\ref{thm:tv-stability}}

\begin{proof}
We show that if the true joint $P_S$ is close to the product-of-marginals $Q_S$ in total variation distance, then the mode of $Q_S$ is preserved as the mode of $P_S$.

Let $Q_S(z \mid E) = \prod_{i \in S} p_i(z_i \mid E)$ be the product-of-marginals approximation, and let $\Delta_Q := Q_S(x_S^\star \mid E) - \max_{z \neq x_S^\star} Q_S(z \mid E) > 0$ be the mode gap of $Q_S$.

\paragraph{Step 1: Recall the definition of total variation distance.}
For discrete distributions $P$ and $Q$ on a finite set $\mathcal{X}$:
\begin{equation}
d_{\TV}(P, Q) = \frac{1}{2}\sum_{x \in \mathcal{X}} |P(x) - Q(x)| = \max_{A \subseteq \mathcal{X}} |P(A) - Q(A)|.
\end{equation}
In particular, for any single atom $z$: $|P_S(z \mid E) - Q_S(z \mid E)| \le d_{\TV}(P_S, Q_S)$.

\paragraph{Step 2: Lower bound on $P_S(x_S^\star \mid E)$.}
Applying the pointwise bound to the mode:
\begin{align}
P_S(x_S^\star \mid E)
&\ge Q_S(x_S^\star \mid E) - |P_S(x_S^\star \mid E) - Q_S(x_S^\star \mid E)| \notag \\
&\ge Q_S(x_S^\star \mid E) - d_{\TV}(P_S, Q_S). \label{eq:tv_lower_proof}
\end{align}

\paragraph{Step 3: Upper bound on $P_S(z \mid E)$ for any competitor $z \neq x_S^\star$.}
For any $z \neq x_S^\star$:
\begin{align}
P_S(z \mid E)
&\le Q_S(z \mid E) + |P_S(z \mid E) - Q_S(z \mid E)| \notag \\
&\le Q_S(z \mid E) + d_{\TV}(P_S, Q_S) \notag \\
&\le \bigl[Q_S(x_S^\star \mid E) - \Delta_Q\bigr] + d_{\TV}(P_S, Q_S), \label{eq:tv_upper_proof}
\end{align}
where the last inequality uses the definition of $\Delta_Q$: $Q_S(z \mid E) \le Q_S(x_S^\star \mid E) - \Delta_Q$ for all $z \neq x_S^\star$.

\paragraph{Step 4: Combine and conclude.}
Subtracting Eq.~\ref{eq:tv_upper_proof} from Eq.~\ref{eq:tv_lower_proof}:
\begin{align}
P_S(x_S^\star \mid E) - P_S(z \mid E)
&\ge \bigl[Q_S(x_S^\star \mid E) - d_{\TV}\bigr] - \bigl[Q_S(x_S^\star \mid E) - \Delta_Q + d_{\TV}\bigr] \notag \\
&= \Delta_Q - 2\,d_{\TV}(P_S, Q_S). \label{eq:tv_gap_proof}
\end{align}
If $d_{\TV}(P_S, Q_S) < \Delta_Q / 2$, then $\Delta_Q - 2\,d_{\TV} > 0$, so $P_S(x_S^\star \mid E) > P_S(z \mid E)$ for all $z \neq x_S^\star$. Hence $x_S^\star$ is the unique maximizer of $P_S(\cdot \mid E)$.
\end{proof}


\subsection{Proof of Corollary~\ref{cor:kl-stability}}

\begin{proof}
We reduce the KL condition to the TV condition of Lemma~\ref{thm:tv-stability} via Pinsker's inequality.

\paragraph{Step 1: State Pinsker's inequality.}
For any two distributions $P$ and $Q$ on the same measurable space:
\begin{equation}
d_{\TV}(P, Q) \le \sqrt{\frac{1}{2}\,D_{\KL}(P \| Q)}.
\label{eq:pinsker}
\end{equation}

\paragraph{Step 2: Apply to our setting.}
Suppose $D_{\KL}(P_S \| Q_S) < \Delta_Q^2 / 2$. Then by Pinsker's inequality:
\begin{align}
d_{\TV}(P_S, Q_S)
&\le \sqrt{\frac{1}{2}\,D_{\KL}(P_S \| Q_S)} \notag \\
&< \sqrt{\frac{1}{2} \cdot \frac{\Delta_Q^2}{2}} \notag \\
&= \sqrt{\frac{\Delta_Q^2}{4}} = \frac{\Delta_Q}{2}. \label{eq:pinsker_applied}
\end{align}

\paragraph{Step 3: Invoke Lemma~\ref{thm:tv-stability}.}
Since $d_{\TV}(P_S, Q_S) < \Delta_Q / 2$, the hypothesis of Lemma~\ref{thm:tv-stability} is satisfied. Therefore $x_S^\star$ is the unique maximizer of $P_S(\cdot \mid E)$.
\end{proof}


\section{Proofs for Certified Frontier}
\label{app:proofs-certified}


\subsection{Proof of Proposition~\ref{thm:robust-frechet}}

\begin{proof}
We extend the Fr\'{e}chet greedy equivalence (Theorem~\ref{thm:frechet-greedy}) to the calibration-robust setting where true confidences $c_i$ are bounded below by $\underline{c}_i := \max\{0, \hat{c}_i - \eta_i\}$.

\paragraph{Step 1: Robust lower bound on the correct tuple.}
Define $A_i := \{X_i = x_i^\star\}$ for each $i \in S$. By hypothesis, $c_i = \Pr(A_i \mid E) \ge \underline{c}_i$ for all $i \in S$. Applying Lemma~\ref{lem:frechet-bonferroni} to the conditional measure $\Pr(\cdot \mid E)$:
\begin{align}
\Pr(X_S = x_S^\star \mid E)
&= \Pr\!\left(\bigcap_{i \in S} A_i \;\middle|\; E\right) \notag \\
&\ge \max\!\left\{0,\; \sum_{i \in S} \Pr(A_i \mid E) - (|S|-1)\right\} \notag \\
&= \max\!\left\{0,\; \sum_{i \in S} c_i - (|S|-1)\right\}. \label{eq:robust_frechet_step1}
\end{align}
Since $c_i \ge \underline{c}_i$ for all $i \in S$, and the function $\max\{0, \sum_i x_i - (|S|-1)\}$ is non-decreasing in each $x_i$:
\begin{equation}
\Pr(X_S = x_S^\star \mid E) \ge \max\!\left\{0,\; \sum_{i \in S} \underline{c}_i - (|S|-1)\right\} = L_\eta(S).
\label{eq:robust_lower_final}
\end{equation}

\paragraph{Step 2: Robust upper bound on any competitor.}
Fix any $z \neq x_S^\star$. There exists $k \in S$ with $z_k \neq x_k^\star$. Then:
\begin{align}
\Pr(X_S = z \mid E)
&\le \Pr(X_k = z_k \mid E) \tag{marginalization} \\
&\le 1 - \Pr(X_k = x_k^\star \mid E) \tag{$x_k^\star$ is the greedy argmax} \\
&= 1 - c_k \notag \\
&\le 1 - \underline{c}_k \tag{since $c_k \ge \underline{c}_k$} \\
&\le 1 - \min_{i \in S} \underline{c}_i = U_\eta(S). \label{eq:robust_upper_final}
\end{align}

\paragraph{Step 3: Conclude.}
If $L_\eta(S) > U_\eta(S)$, then combining Eqs.~\ref{eq:robust_lower_final} and~\ref{eq:robust_upper_final}:
\begin{equation}
\Pr(X_S = x_S^\star \mid E) \ge L_\eta(S) > U_\eta(S) \ge \Pr(X_S = z \mid E)
\end{equation}
for every $z \neq x_S^\star$. Therefore $x_S^\star$ is the unique maximizer of $P_S(\cdot \mid E)$.
\end{proof}


\subsection{Proof of Corollary~\ref{cor:calibration-penalty}}

\begin{proof}
We derive the calibration penalty under uniform $\eta$ and the no-clipping assumption.

\paragraph{Step 1: Compute the robust lower bound.}
Under uniform calibration error $\eta_i = \eta$ for all $i$, the robust confidences are $\underline{c}_i = \max\{0, \hat{c}_i - \eta\}$. The no-clipping assumption states $\hat{c}_{(j)} \ge \eta$ for all $j \le n$, so $\underline{c}_{(j)} = \hat{c}_{(j)} - \eta$ for all selected positions. Therefore:
\begin{align}
L_\eta
&= \max\!\left\{0,\; \sum_{j=1}^n \underline{c}_{(j)} - (n-1)\right\} \notag \\
&= \max\!\left\{0,\; \sum_{j=1}^n (\hat{c}_{(j)} - \eta) - (n-1)\right\} \notag \\
&= \max\!\left\{0,\; \sum_{j=1}^n \hat{c}_{(j)} - n\eta - (n-1)\right\}. \label{eq:cal_Ln}
\end{align}
Assuming the robust lower bound is active (i.e., the expression inside the max is positive):
\begin{equation}
L_\eta = \sum_{j=1}^n \hat{c}_{(j)} - n\eta - (n-1). \label{eq:cal_Ln_active}
\end{equation}

\paragraph{Step 2: Compute the robust upper bound.}
\begin{align}
U_\eta
&= 1 - \min_{j \le n} \underline{c}_{(j)} = 1 - \underline{c}_{(n)} \notag \\
&= 1 - (\hat{c}_{(n)} - \eta) = 1 - \hat{c}_{(n)} + \eta. \label{eq:cal_Un}
\end{align}

\paragraph{Step 3: Compute the robust Fr\'{e}chet score.}
\begin{align}
G_n^{(\eta)}
&= L_\eta - U_\eta \notag \\
&= \left[\sum_{j=1}^n \hat{c}_{(j)} - n\eta - (n-1)\right] - \left[1 - \hat{c}_{(n)} + \eta\right] \notag \\
&= \sum_{j=1}^n \hat{c}_{(j)} + \hat{c}_{(n)} - n - n\eta - \eta \notag \\
&= \left[\sum_{j=1}^n \hat{c}_{(j)} + \hat{c}_{(n)} - n\right] - (n+1)\eta. \label{eq:cal_Gn_expand}
\end{align}

\paragraph{Step 4: Identify the uncorrected score.}
The uncorrected (naive) Fr\'{e}chet score is obtained by setting $\eta = 0$:
\begin{equation}
\hat{G}_n = \sum_{j=1}^n \hat{c}_{(j)} + \hat{c}_{(n)} - n.
\end{equation}
Therefore:
\begin{equation}
G_n^{(\eta)} = \hat{G}_n - (n+1)\eta.
\end{equation}
The calibration penalty is $(n+1)\eta$, which grows linearly with both the commit set size $n$ and the calibration error $\eta$.
\end{proof}
\section{Calibration-Robust Fr\'{e}chet Certificates}
\label{app:certified-frontier}

\subsection{Reported Confidence versus Actual Correctness}
\label{sec:robust-frechet}

\paragraph{Reported confidence versus actual correctness.}
In Section~\ref{sec:theory}, we used $c_i$ for the ideal marginal confidence entering the theoretical certificate. In an implementation, however, the decoder observes a model-reported confidence
\[
\hat c_i := \max_{v\in\mathcal V}p_\theta(X_i=v\mid E).
\]
This reported value may be overconfident relative to actual correctness. To discuss calibration, let
\[
r_i := \Pr(A_i\mid E), \qquad A_i=\{X_i=x_i^\star\},
\]
denote the actual probability that the proposed token is correct under the evaluation distribution or a chosen verifier reference.
Calibration is a known issue in modern neural networks and language models: predicted probabilities can deviate substantially from empirical correctness, and post-hoc calibration methods such as temperature scaling are commonly used to mitigate this problem~\citep{guo2017calibration,desai2020calibration,jiang2021calibration,kadavath2022language}. Recent work comparing generative and discriminative text classifiers also shows that classifier design choices in the transformer era affect not only accuracy but also sample efficiency, calibration, and robustness~\citep{kasa2025generative}. This motivates replacing raw reported confidences with conservative lower confidence bounds before applying the Fr\'{e}chet certificate.

We assume a one-sided calibration envelope
\[
r_i \ge \underline{c}_i := \max\{0,\hat c_i-\eta_i\},
\]
where $\eta_i\ge 0$ is a calibration allowance. Thus $\underline{c}_i$ is a conservative lower confidence: we do not trust the raw model confidence $\hat c_i$ fully, but we assume that subtracting $\eta_i$ makes it safe.

Define the robust Fr\'{e}chet lower bound and competitor upper bound:
\begin{equation}
L_\eta(S) := \max\!\left\{0,\, \sum_{i \in S} \underline{c}_i - (|S|-1)\right\},
\qquad
U_\eta(S) := 1 - \min_{i \in S} \underline{c}_i.
\label{eq:robust_L_U}
\end{equation}

\begin{proposition}[Calibration-robust Fr\'{e}chet certificate]
\label{thm:robust-frechet}
If $r_i \ge \underline{c}_i$ for all $i \in S$ and $L_\eta(S) > U_\eta(S)$, then $x_S^\star$ is the unique maximizer of $P_S(\cdot \mid E)$.
\end{proposition}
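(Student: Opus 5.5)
The plan is to rerun the argument of Theorem~\ref{thm:frechet-greedy}, with the actual correctness probabilities $r_i=\Pr(A_i\mid E)$ in place of the reported confidences, and then push every bound through the calibration envelope $r_i\ge\underline{c}_i$. Monotonicity in each coordinate is what lets us replace the unknown $r_i$ by the computable $\underline{c}_i$. Throughout, we interpret $x_i^\star$ as the argmax of the distribution under which $r_i$ is computed. This is exactly the hypothesis under which ``any other token has probability at most $1-r_i$'' holds, and it is the same identification the paper uses in the Theorem.

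\textbf{Step 1 (lower bound on the proposed tuple).} The event $\{X_S=x_S^\star\}$ is $\bigcap_{i\in S}A_i$. Lemma~\ref{lem:frechet-bonferroni} applied to $\Pr(\cdot\mid E)$ gives
$P_S(x_S^\star\mid E)\ge\max\{0,\sum_{i\in S}r_i-(|S|-1)\}$.
The map $(x_i)_{i\in S}\mapsto\max\{0,\sum_i x_i-(|S|-1)\}$ is nondecreasing in each coordinate. Substituting $r_i\ge\underline{c}_i$ therefore yields $P_S(x_S^\star\mid E)\ge L_\eta(S)$.

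\textbf{Step 2 (upper bound on any competitor).} Fix $z\neq x_S^\star$ and pick a coordinate $k$ with $z_k\neq x_k^\star$. Marginalizing out the other coordinates gives $P_S(z\mid E)\le\Pr(X_k=z_k\mid E)$. Since $\{X_k=z_k\}$ is disjoint from $A_k$, this is at most $1-r_k$. Then
$1-r_k\le1-\underline{c}_k\le1-\min_{i\in S}\underline{c}_i=U_\eta(S)$.
Note that disjointness alone suffices here, so the argmax property is not even needed for this step.

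\textbf{Step 3 (conclusion).} Chaining the two steps gives
$P_S(x_S^\star\mid E)\ge L_\eta(S)>U_\eta(S)\ge P_S(z\mid E)$
for every $z\neq x_S^\star$. This establishes uniqueness of the maximizer.

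The main point to handle carefully is Step~2. Its bound must use only $\underline{c}_k$ and never the raw $\hat c_k$, because $\hat c_k$ may overstate $r_k$. The disjointness observation resolves this cleanly, since the bound then depends only on $r_k$, which the envelope controls. The remaining steps are routine monotonicity arguments.
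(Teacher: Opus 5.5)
Your proof is correct and follows the paper's argument step for step: the Fr\'{e}chet--Bonferroni lemma plus coordinatewise monotonicity gives the lower bound $L_\eta(S)$, marginalizing onto a disagreeing coordinate $k$ gives the competitor bound $U_\eta(S)$, and chaining the two through $L_\eta(S)>U_\eta(S)$ gives uniqueness. Your remark that Step~2 needs only the disjointness of $\{X_k=z_k\}$ and $A_k$ is a small improvement over the paper, which appeals to the argmax property instead. That appeal is not quite apt in the calibration setting, because $x_k^\star$ is the model's argmax rather than the argmax under the reference measure that defines $r_k$. For the same reason, your opening assumption about how to interpret $x_i^\star$ is unnecessary.
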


\begin{corollary}[Calibration penalty]
\label{cor:calibration-penalty}
Under uniform calibration error $\eta$, assuming no selected robust confidence is clipped at zero (i.e., $\hat{c}_{(j)} \ge \eta$ for all $j \le n$) and the robust lower bound is active, the robust Fr\'{e}chet score satisfies
\begin{equation}
G_n^{(\eta)} = \hat{G}_n - (n+1)\eta,
\label{eq:calibration_penalty}
\end{equation}
where $\hat{G}_n$ is the uncorrected score. A calibration error of $\eta$ costs $(n+1)\eta$ in Fr\'{e}chet margin.
\end{corollary}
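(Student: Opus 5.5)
The plan is a direct algebraic computation. We substitute the robust confidences into the definitions of $L_\eta$ and $U_\eta$ in Eq.~\ref{eq:robust_L_U}, then compare the result with the uncorrected score $\hat G_n := \hat L_n - \hat U_n$. Here $\hat L_n$ and $\hat U_n$ are the quantities of Eq.~\ref{eq:frechet_Ln_Un} evaluated at the reported confidences $\hat c_{(j)}$.

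\textbf{Step 1: remove the clipping.} The no-clipping hypothesis $\hat c_{(j)} \ge \eta$ for all $j\le n$ gives $\underline{c}_{(j)} = \hat c_{(j)} - \eta$ exactly. Subtracting the same constant $\eta$ from every coordinate preserves the ordering, so the minimum $\min_{i\in S}\underline{c}_i$ equals $\hat c_{(n)}-\eta$. The sorted robust profile is therefore just a shift of the reported profile.

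\textbf{Step 2: compute the two bounds.} Since the robust lower bound is assumed active, the $\max\{0,\cdot\}$ drops, and
$L_\eta = \sum_{j=1}^n \hat c_{(j)} - n\eta - (n-1)$.
Likewise
$U_\eta = 1 - \hat c_{(n)} + \eta$.
Subtracting gives $G_n^{(\eta)} = \bigl[\sum_j \hat c_{(j)} + \hat c_{(n)} - n\bigr] - (n+1)\eta$. This is the same expansion used in Step 1 of the proof of Proposition~\ref{thm:heterogeneity-bonus}.

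\textbf{Step 3: identify $\hat G_n$.} The bracket is exactly the $\eta=0$ case of the same formula, so the identity $G_n^{(\eta)} = \hat G_n - (n+1)\eta$ follows.

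The only real subtlety, and the step I expect to be the main obstacle, is the $\max$ in $\hat L_n$. Identifying the bracket with $\hat G_n$ requires the uncorrected bound to be active as well. This follows from the hypotheses: the active robust bound gives $\sum_j \hat c_{(j)} - (n-1) > n\eta \ge 0$, so $\hat L_n$ is active too. I would state this explicitly.
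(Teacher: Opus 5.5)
Your proposal is correct and follows the paper's proof essentially step for step. Both remove the clipping via $\hat c_{(j)}\ge\eta$, drop the $\max$ by the activity assumption, compute $U_\eta = 1-\hat c_{(n)}+\eta$, subtract, and read off $\hat G_n$ as the $\eta=0$ case. Your extra check that $\hat L_n$ is itself active (since $\sum_j \hat c_{(j)}-(n-1) > n\eta \ge 0$) is a small gain in rigor: the paper avoids the issue by defining $\hat G_n$ directly as the unclipped $\eta=0$ expression, whereas your argument shows this agrees with $\hat L_n-\hat U_n$ as defined in Eq.~\ref{eq:frechet_Ln_Un}.
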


Eq.~\ref{eq:calibration_penalty} explains why aggressive Fr\'{e}chet decoding can fail under overconfidence: the penalty scales linearly with the commit set size. This motivates either calibrating the model or using the robust selector with an explicit $\eta$ budget.

\section{Additional Experiments}
\label{app:additional-experiments}

\subsection{Experiment Evaluation Results on Dream}
\label{app:dream-results}

Table~\ref{tab:dream-results} evaluates Fr\'{e}chet profile decoding on
Dream-v0-Base-7B under PrefixCache with block size 32. All runs use a single
H100 GPU. Improvements are reported relative to threshold decoding.

\begin{table*}[t]
\centering
\small
\caption{
{\bf Benchmark results on Dream-v0-Base-7B.} (PrefixCache, block size 32, H100).
Throughput improvements are reported as speedup over threshold decoding.
NFE reduction is relative to threshold. NA indicates runs where NFE
logging was not available. 
}
\label{tab:dream-results}
\begin{tabular}{llccccc}
\toprule
Dataset & Len. & Method & Acc.\ (\%) & Tok/s $\uparrow$ & NFE $\downarrow$ & Tok/NFE \\
\midrule
\multirow{3}{*}{GSM8K (5-shot)}
& 256 & Threshold & 73.7 & 51.1 (\speedup{1.00}) & 202{,}862 & 1.66 \\
& 256 & Factor & 72.0 & 63.1 (\speedup{1.23}) & 176{,}073 (\nfered{13.2}) & 1.89 \\
& 256 & Fr\'{e}chet & 72.0 & \textbf{63.8} (\speedup{1.25}) & \textbf{171{,}925} (\nfered{15.3}) & \textbf{1.96} \\
\midrule
\multirow{3}{*}{MATH (4-shot)}
& 256 & Threshold & 34.0 & 72.6 (\speedup{1.00}) & 568{,}643 & 2.25 \\
& 256 & Factor & 33.2 & 92.4 (\speedup{1.27}) & 442{,}791 (\nfered{22.1}) & 2.80 \\
& 256 & Fr\'{e}chet & 33.1 & \textbf{97.0} (\speedup{1.34}) & \textbf{433{,}971} (\nfered{23.7}) & \textbf{2.95} \\
\midrule
\multirow{3}{*}{HumanEval (0-shot)}
& 256 & Threshold & 37.8 & 53.7 (\speedup{1.00}) & 25{,}465 & 1.64 \\
& 256 & Factor & 36.0 & 68.3 (\speedup{1.27}) & 20{,}246 (\nfered{20.5}) & 2.00 \\
& 256 & Fr\'{e}chet & 35.4 & \textbf{70.6} (\speedup{1.31}) & \textbf{20{,}156} (\nfered{20.8}) & \textbf{2.14} \\
\midrule
\multirow{3}{*}{MBPP (3-shot)}
& 256 & Threshold & 56.0 & 85.1 (\speedup{1.00}) & 47{,}070 & 2.72 \\
& 256 & Factor & 51.8 & 106.6 (\speedup{1.25}) & 38{,}043 (\nfered{19.2}) & 3.43 \\
& 256 & Fr\'{e}chet & 53.2 & \textbf{111.2} (\speedup{1.31}) & \textbf{35{,}170} (\nfered{25.3}) & \textbf{3.64} \\
\midrule
\multirow{3}{*}{GSM8K (5-shot)}
& 512 & Threshold & 73.7 & 57.2 & 355{,}339 & 1.90 \\
& 512 & Factor & 73.1 & 70.5 (\speedup{1.23}) & 302{,}274 (\nfered{14.9}) & 2.23 \\
& 512 & Fr\'{e}chet & 72.7 & \textbf{70.7} (\speedup{1.24}) & \textbf{293{,}029} (\nfered{17.5}) & \textbf{2.30} \\
\midrule
\multirow{3}{*}{MATH (4-shot)}
& 512 & Threshold & 35.3 & 99.1 & 776{,}016 & 3.30 \\
& 512 & Factor & 34.1 & 118.6 (\speedup{1.20}) & 623{,}643 (\nfered{19.6}) & 4.10 \\
& 512 & Fr\'{e}chet & 33.5 & \textbf{127.5} (\speedup{1.29}) & \textbf{595{,}000} (\nfered{23.3}) & \textbf{4.30} \\
\midrule
\multirow{3}{*}{HumanEval (0-shot)}
& 512 & Threshold & 37.8 & 58.7 (\speedup{1.00}) & 44{,}862 & 1.87 \\
& 512 & Factor & 36.6 & 71.0 (\speedup{1.21}) & 37{,}740 (\nfered{15.9}) & 2.22 \\
& 512 & Fr\'{e}chet & 37.2 & \textbf{72.7} (\speedup{1.24}) & \textbf{37{,}168} (\nfered{17.2}) & \textbf{2.25} \\
\midrule
\multirow{3}{*}{MBPP (3-shot)}
& 512 & Threshold & 53.2 & 117.6 (\speedup{1.00}) & 63{,}108 & 4.06 \\
& 512 & Factor & 52.8 & 141.5 (\speedup{1.20}) & 51{,}779 (\nfered{18.0}) & 4.94 \\
& 512 & Fr\'{e}chet & 52.8 & \textbf{153.2} (\speedup{1.30}) & \textbf{47{,}709} (\nfered{24.4}) & \textbf{5.37} \\
\bottomrule
\end{tabular}
\end{table*}

\paragraph{Dream efficiency gains.}
On Dream-v0-Base-7B, Fr\'{e}chet decoding achieves \textbf{1.28$\times$}
average throughput over threshold decoding and \textbf{21.1\%} NFE reduction
(averaged over the six settings where NFE is available), with a 1.45-point
average accuracy change. Relative to the Dream full-step baseline,
Fr\'{e}chet achieves \textbf{2.80$\times$} average throughput and
\textbf{61.6\%} NFE reduction. The largest gains occur on MATH and MBPP where
the confidence profiles are most heterogeneous; on GSM8K the speedup is more
modest (${\sim}1.25\times$) due to Dream's concentrated confidence distribution
on this task.

\subsection{Effect of prefill length and cache mode}

We next evaluate whether Fr\'{e}chet profile decoding remains effective in the
long-generation regime studied by Fast-dLLM. Following the Table~4 setup of
Fast-dLLM, we compare no cache, PrefixCache, and DualCache under both 5-shot
and 8-shot prompting on GSM8K with generation length 1024. Unlike the original
table, which compares cache variants under a single selector, our goal is to
isolate the effect of the token-selection rule while holding the cache mode
fixed.

Table~\ref{tab:cache_selector_5shot_8shot} shows that Fr\'{e}chet profile
decoding consistently improves decoding efficiency across cache modes. For
5-shot prompting, Fr\'{e}chet improves throughput over factor by $5.7\%$ on
average and improves Tok/NFE by $6.4\%$, while increasing average accuracy by
$1.18$ points. For 8-shot prompting, Fr\'{e}chet improves throughput over factor
by $4.1\%$ and Tok/NFE by $5.6\%$, again with a positive average accuracy
change of $0.89$ points. Compared to threshold decoding, the throughput gains
are larger: $28.2\%$ on average for 5-shot and $24.4\%$ for 8-shot.

These gains come from the selector rather than the cache mechanism: within each
cache mode, all methods use the same attention reuse strategy. The consistent
increase in Tok/NFE indicates that Fr\'{e}chet commits more tokens per denoising
model call, reducing the number of required function evaluations. This supports
our theoretical interpretation that profile-aware decoding extracts additional
safe parallelism beyond weakest-token factor decoding.

\begin{table*}[t]
\centering
\small
\setlength{\tabcolsep}{4.5pt}
\caption{
Selector comparison under different cache modes for LLaDA-8B on GSM8K with
generation length 1024. We report accuracy, throughput, and tokens per function
evaluation (Tok/NFE) for 5-shot and 8-shot settings. Within each cache mode and
shot setting, Fr\'{e}chet achieves the highest throughput and Tok/NFE while
maintaining comparable accuracy.
}
\label{tab:cache_selector_5shot_8shot}
\begin{tabular}{llcccccc}
\toprule
\multirow{2}{*}{Cache} & \multirow{2}{*}{Selector} &
\multicolumn{3}{c}{5-shot} & \multicolumn{3}{c}{8-shot} \\
\cmidrule(lr){3-5} \cmidrule(lr){6-8}
& & Acc. & Tok/s & Tok/NFE & Acc. & Tok/s & Tok/NFE \\
\midrule
\multirow{3}{*}{No cache}
& Threshold $\tau{=}0.9$     & \textbf{78.33} & 25.56 & 2.33 & 79.84 & 19.99 & 2.32 \\
& Factor $f{=}1.0$           & 76.67 & 32.37 & 2.92 & 79.65 & 25.01 & 2.87 \\
& Fr\'{e}chet $\delta{=}0.25$ & \textbf{78.33} & \textbf{35.08} & \textbf{3.16} & \textbf{80.33} & \textbf{27.11} & \textbf{3.09} \\
\midrule
\multirow{3}{*}{PrefixCache}
& Threshold $\tau{=}0.9$     & \textbf{78.33} & 38.48 & 2.31 & \textbf{77.33} & 32.38 & 2.25 \\
& Factor $f{=}1.0$           & \textbf{78.33} & 44.76 & 2.87 & 76.67 & 38.17 & 2.87 \\
& Fr\'{e}chet $\delta{=}0.25$ & 78.21 & \textbf{46.86} & \textbf{3.03} & \textbf{77.33} & \textbf{38.73} & \textbf{3.00} \\
\midrule
\multirow{3}{*}{DualCache}
& Threshold $\tau{=}0.9$     & 76.67 & 37.66 & 2.17 & 77.33 & 34.56 & 2.21 \\
& Factor $f{=}1.0$           & 76.33 & 45.52 & 2.76 & 76.67 & 39.84 & 2.76 \\
& Fr\'{e}chet $\delta{=}0.25$ & \textbf{78.33} & \textbf{47.34} & \textbf{2.91} & \textbf{78.00} & \textbf{40.77} & \textbf{2.89} \\
\bottomrule
\end{tabular}
\end{table*}

\begin{figure*}[t]
\centering
\includegraphics[width=0.85\linewidth]{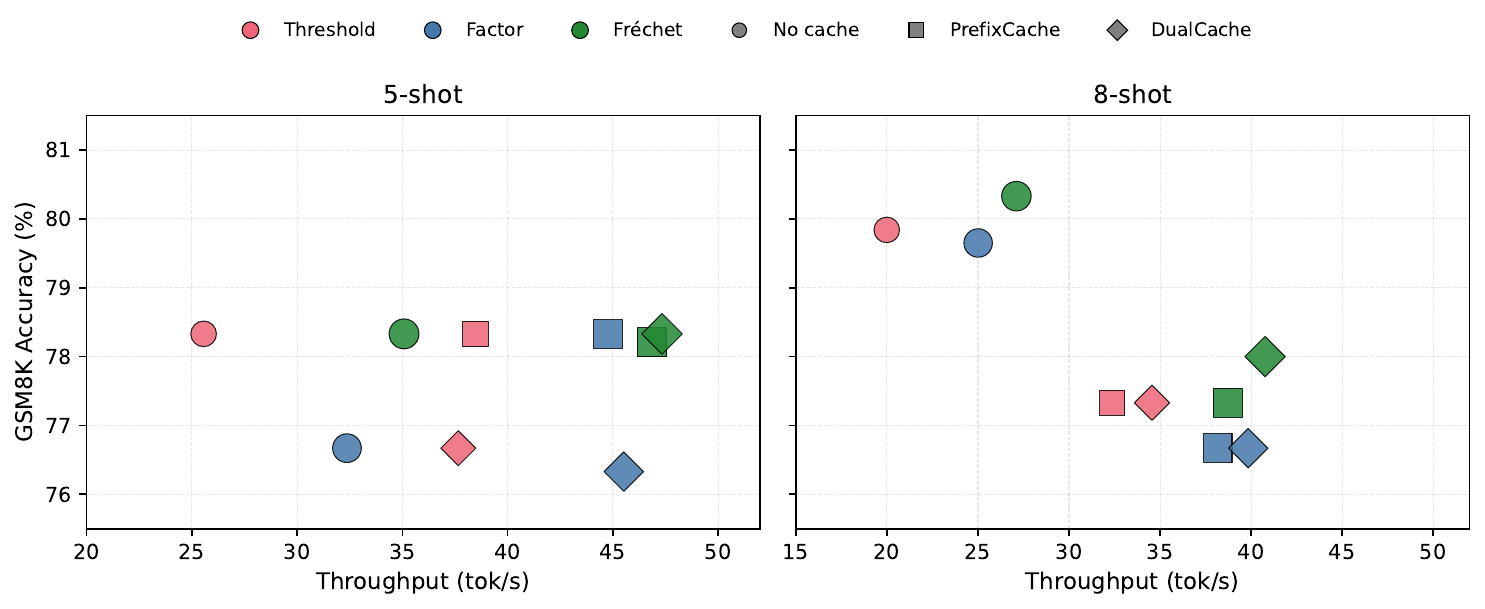}
\caption{
Accuracy--throughput trade-off across cache modes for 5-shot and 8-shot GSM8K
at generation length 1024. Marker shape encodes cache mode; color encodes
selector method; marker size is proportional to Tok/NFE. Fr\'{e}chet profile
decoding (green) consistently achieves the highest throughput and Tok/NFE in
every cache mode while maintaining comparable or better accuracy.
}
\label{fig:table4-selector-scatter}
\end{figure*}

\definecolor{CorrectGreen}{HTML}{0B6B3A}
\definecolor{WrongRed}{HTML}{A51C30}
\definecolor{NeutralGray}{HTML}{555555}
\definecolor{LightGray}{HTML}{F7F7F7}

\newcommand{\Frechet}{Fr\'echet}
\newcommand{\CorrectTag}{\textcolor{CorrectGreen}{\textsc{Correct}}}
\newcommand{\WrongTag}{\textcolor{WrongRed}{\textsc{Wrong}}}
\newcommand{\MixedTag}{\textcolor{NeutralGray}{\textsc{Mixed}}}
\newcommand{\Placeholder}[1]{\textcolor{NeutralGray}{[#1]}}

\lstdefinestyle{modeltext}{
  basicstyle=\ttfamily\scriptsize,
  breaklines=true,
  breakatwhitespace=false,
  columns=fullflexible,
  keepspaces=true,
  showstringspaces=false,
  tabsize=2
}

\lstdefinestyle{modelcode}{
  style=modeltext,
  language=Python
}

\tcbset{
  modelpanelbase/.style={
    enhanced,
    listing only,
    breakable,
    colback=white,
    colframe=black!20,
    coltitle=black,
    fonttitle=\bfseries\footnotesize,
    boxrule=0.4pt,
    arc=1pt,
    left=1mm,
    right=1mm,
    top=1mm,
    bottom=1mm,
    before skip=0.25em,
    after skip=0.25em
  },
  analysisbase/.style={
    enhanced,
    breakable,
    colback=LightGray,
    colframe=black!15,
    boxrule=0.4pt,
    arc=1pt,
    left=1.2mm,
    right=1.2mm,
    top=1mm,
    bottom=1mm,
    before skip=0.5em,
    after skip=0.5em
  }
}

\newtcblisting{modeltextbox}[3][]{%
  modelpanelbase,
  title={#2 \hfill {\normalfont #3}},
  listing options={style=modeltext},
  height plus=2cm,
  #1
}

\newtcblisting{modelcodebox}[3][]{%
  modelpanelbase,
  title={#2 \hfill {\normalfont #3}},
  listing options={style=modelcode},
  #1
}

\newtcolorbox{analysisbox}[1][]{%
  analysisbase,
  #1
}

\section{Qualitative Analysis of Text Generation Results}
\label{app:text-generation-qualitative}

We provide qualitative examples of decoded text generated by \texttt{Threshold}, \texttt{Factor}, and \texttt{\Frechet{}} decoding (See Figure~\ref{fig:app-gsm8k-frechet-correct},\ref{fig:app-gsm8k-threshold-correct},\ref{fig:app-mbpp-identical-code-doc0}). The goal is to isolate where the methods diverge textually, when those divergences affect answer correctness, and whether the reasoning template is preserved across decoding methods.

\begin{figure}[htbp]
    \centering
    \includegraphics[width=\linewidth]{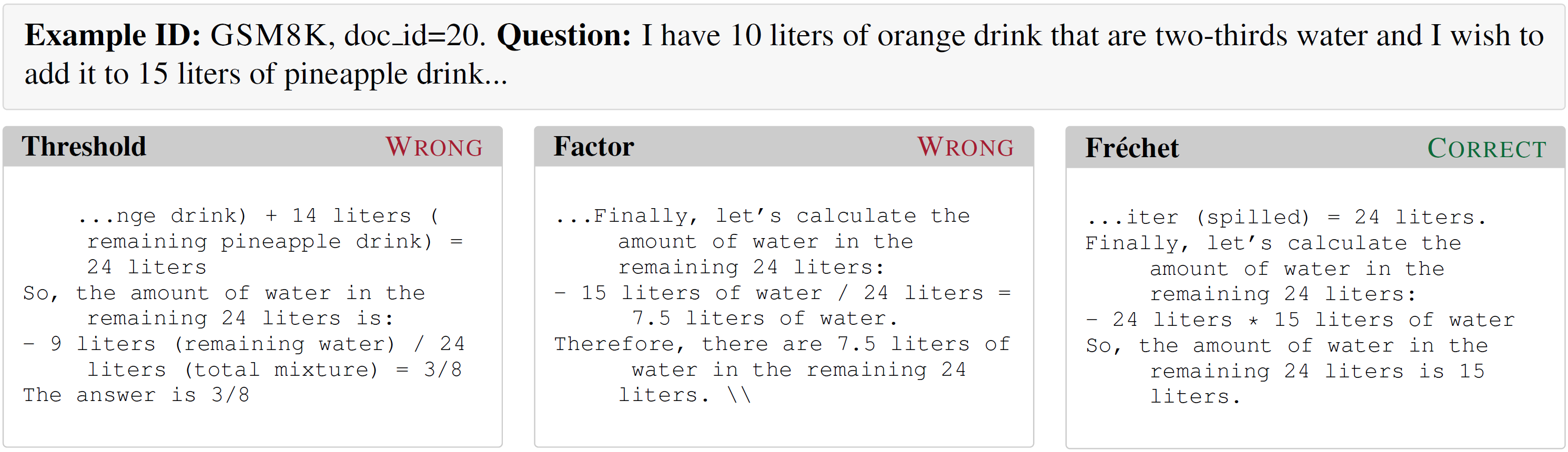}
    \caption{
    {\bf Representative GSM8K example \#1.} \Frechet{} decoding produces the correct
    final answer while the alternative methods make arithmetic-level errors.}
    \label{fig:app-gsm8k-frechet-correct}
\end{figure}

\begin{figure}[t]
    \centering
    \includegraphics[width=\linewidth]{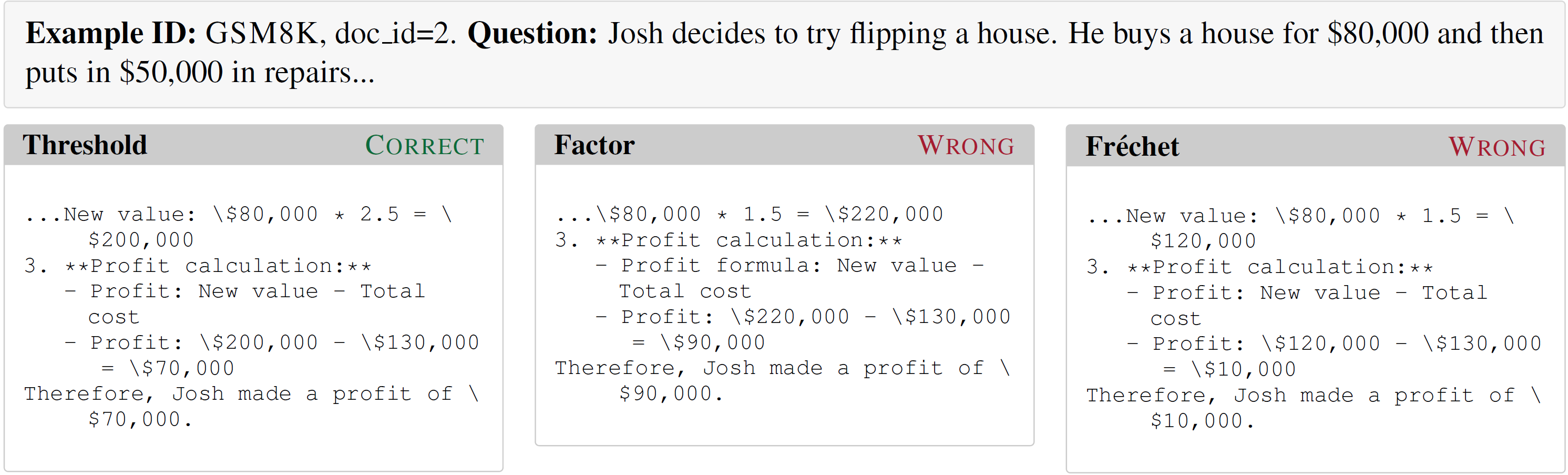}
    \caption{
    {\bf Representative GSM8K example \#2.} Threshold decoding preserves the correct
    numerical path while the other decoding methods commit to incorrect
    intermediate values.}
    \label{fig:app-gsm8k-threshold-correct}
\end{figure}

\begin{figure}[t]
    \centering
    \includegraphics[width=\linewidth]{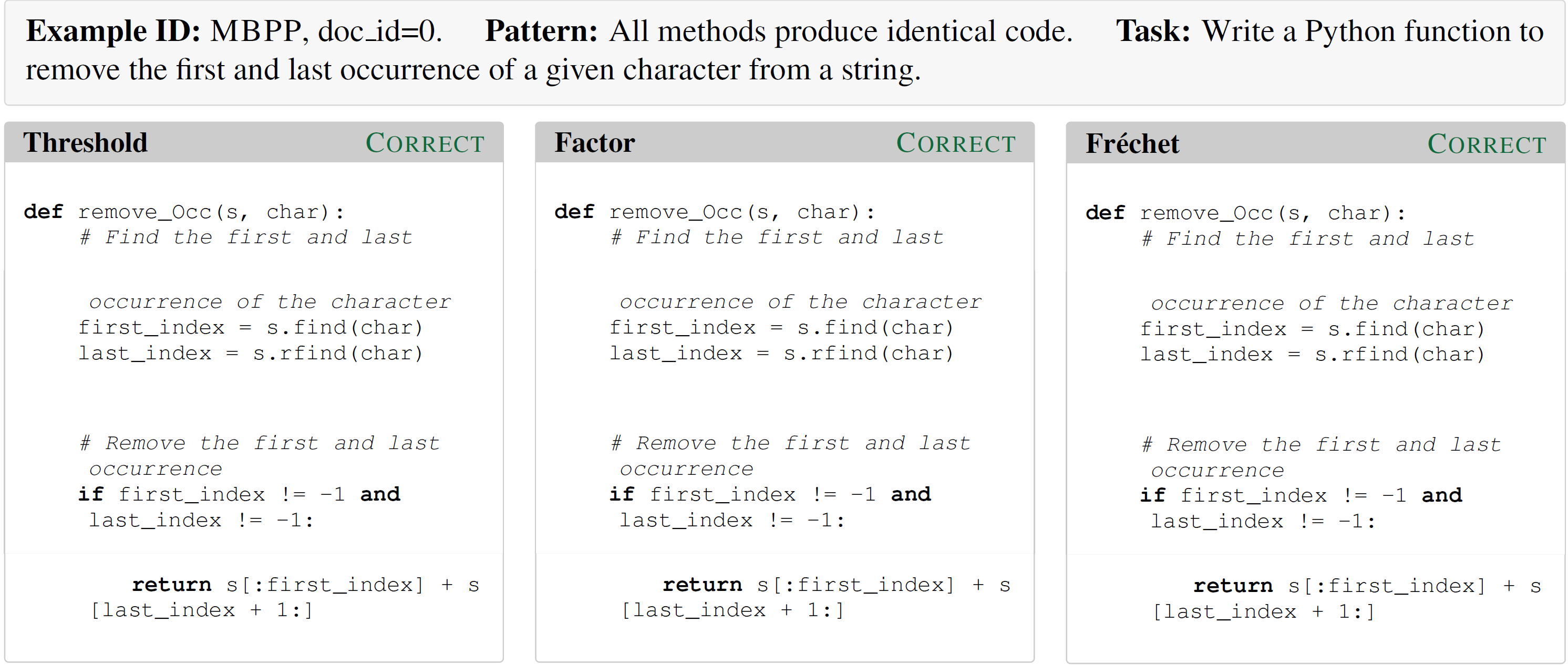}
    \caption{MBPP example where all decoding methods produce identical code.}
    \label{fig:app-mbpp-identical-code-doc0}
\end{figure}

\paragraph{Summary.}
Across GSM8K, we observe that decoded outputs are textually different in most samples, but only a much smaller fraction differ in final correctness. In particular, 1204 out of 1319 GSM8K samples, or 91.3\%, have textually different outputs across decoding methods, while only 171 out of 1319 samples, or 13.0\%, differ in correctness. This indicates that the methods usually preserve the same high-level reasoning structure, while diverging on local numerical values, intermediate arithmetic, or final formatting.

The dominant divergence pattern is that the methods agree on the reasoning template, including sentence structure and step-by-step decomposition, but commit to different specific digits or intermediate values. This is consistent with the failure mode of parallel token decoding: tokens that should be conditionally dependent, such as digits within a number or related arithmetic quantities, may be decoded simultaneously.

\paragraph{Interpretation.}
The large gap between textual divergence and correctness divergence suggests that most method-level differences are not semantic failures. Instead, the methods often preserve the same problem-solving template while varying in surface form, formatting, or local token choices. Correctness changes are concentrated in low-confidence regions, especially numerical tokens in GSM8K and implementation details in MBPP.

\end{document}